
\documentclass{tlp}

\usepackage{amsmath}
\usepackage{graphicx}
\usepackage{multirow}

\usepackage{mathptmx}

\usepackage{times}
\usepackage{soul}
\usepackage{url}
\usepackage{amsmath}
\usepackage{booktabs}
\usepackage{algorithm}
\usepackage{algorithmic}
\usepackage{tabularx}

\usepackage{times, helvet, courier}
\usepackage{graphicx, epsfig, epstopdf}
\usepackage{wrapfig, caption, multirow, url}
\usepackage{mathptmx}
\usepackage{hyperref}
\usepackage{graphicx}
\usepackage{picinpar}
\usepackage{xcolor}
\usepackage{url}
\usepackage{csquotes}
\usepackage{listings}
\usepackage{algorithm}
\usepackage{algorithmic}
\usepackage{todonotes}
\usepackage{multirow} 
\usepackage{todonotes}

\newtheorem{example}{Example}

\newtheorem{definition}{Definition}

\newtheorem{proposition}{Proposition}

\newcounter{rownumber}[figure] 
\setcounter{rownumber}{1}

\usepackage{amsfonts, amsmath, amssymb, mathrsfs}


\newcommand{\specialcellleft}[2][l]{%
        \begin{tabular}[#1]{@{}l@{}}#2\end{tabular}}
\newcommand{\specialcell}[2][c]{%
        \begin{tabular}[#1]{@{}c@{}}#2\end{tabular}}

\newcommand{\causes}{\mbox{ causes }}

\def\naf{\:{not}\:}
\def\lo{\left(}
\def\lc{\right)}
\def\cpsf{{CPSF}}

\def\prob{{\mathbf{success\_with}}}
\def\naf{{ not \:}}
\def\iif{\hbox{\bf if } \; }

\def\causes{\hbox{\bf causes}}

\def\executable{\hbox{\bf executable}}

\def\after{\;\hbox{\bf after} \;}

\usepackage{listings}
\lstdefinelanguage{clingo}{
        keywordstyle=[1]\usefont{OT1}{cmtt}{m}{n},%
        keywordstyle=[2]\textbf,%
        keywordstyle=[3]\usefont{OT1}{cmtt}{m}{n},
        alsoletter={\#,\&},%
        keywords=[1]{not,from,import,exists,if,else,return,while,
                break,and,or,for,in,del,and,class,subClass,concern,aspect,subCo,prop,rdf,cpsf,addBy,suppBy,neg,d,relation,holds,h,obs,action,fluent,occurs,req,group,leadTo,active,deg,comp,order,hSubCo,llh_sat_sub,llh_sat,llh_sat_sub_aux,step,deg_pos,nAllPosCon,nActPosCon,scoreLoS,wBus,wHum,wTru,wFun,wTim,wBou,wLif,wCom,wDat,last,addFun,func,formula,sat_formula,positiveImpact,conflict,member,sa_action,exec,sc_concern,sat_formulas,pos,most,least,twcp,twcn,tw_p,active,wLoS,availablePatch,instance,isInstanceOf,decomp_func,RDFtriple,rdf,rdfs,owl,addConcern,disjunction,conjunction,r,nPos,tw,higher,equal,not_highestTW,not_lowestTW,patch},%
        keywords=[2]{\#const,\#show,\#minimize,\#maximize,\#base,\#theory,\#count,\#external,\#program,\#script,\#end,\#heuristic,\#edge,\#project,\#show,\#sum,\#min,\#max},%
        keywords=[3]{&,&dom,&sum,&diff,&show,&minimize},%
        morecomment=[l]{\#\ },%
        morecomment=[l]{\%\ },%
        commentstyle={\color{darkgray}}%
}
\lstset{numbers=left,numberblanklines=false,basicstyle=\ttfamily\small,language=clingo}
\lstset{escapeinside={*@}{@*}}
\lstset{aboveskip=0pt,belowskip=0.2em}

\newcommand{\memo}[1]{
        \ifthenelse {\boolean{includeMemo}}{\medskip\noindent\fbox{\begin{minipage}[b]{\dimexpr\linewidth-1em}#1\end{minipage}}\medskip\newline} 
}

\begin{document}

\lefttitle{Thanh H. Nguyen, et al.}
\jnlPage{1}{8}
\jnlDoiYr{2021}
\doival{10.1017/xxxxx}

\title[Specifying and Reasoning about CPS through the Lens of the NIST CPS Framework]{Specifying and Reasoning about CPS through the Lens of the NIST CPS Framework}

\begin{authgrp}
\author{\sn{Thanh} \gn{Hai Nguyen}, \sn{Matthew} \gn{Bundas}, \sn{Tran} \gn{Cao Son}}
\affiliation{Department of Computer Science, New Mexico State University, Las Cruces, USA}
\affiliation{\email{thanhnh@nmsu.edu,bundasma@nmsu.edu,stran@nmsu.edu}}
\author{\sn{Marcello} \gn{Balduccini}, \sn{Kathleen} \gn{Campbell Garwood}}
\affiliation{Saint Joseph's University, Philadelphia, USA}
\affiliation{\email{mbalducc@sju.edu,kcampbel@sju.edu}}
\author{\sn{Edward} \gn{R. Griffor}}
\affiliation{National Institute of Standards and Technologies, USA}
\affiliation{\email{edward.griffor@nist.gov}}
\end{authgrp}

\history{\sub{xx xx xxxx;} \rev{xx xx xxxx;} \acc{xx xx xxxx}}

\maketitle

\begin{abstract}
This paper introduces a formal definition of a Cyber-Physical System (CPS) in the spirit of the CPS Framework proposed by the National Institute of Standards and Technology (NIST). It shows that using this definition, various problems related to concerns in a CPS can be precisely formalized and implemented using Answer Set Programming (ASP). These include problems related to the dependency or conflicts between concerns, how to mitigate an issue, and what the most suitable mitigation strategy for a given issue would be. It then shows how ASP can be used to develop an implementation that addresses the aforementioned problems. The paper concludes with a discussion of the potentials of the proposed methodologies.  
\end{abstract}

\begin{keywords}
Artificial Intelligence, Knowledge Representation, Automated Reasoning and Planning, Cyber-Physical System, Answer Set Programming, Concern Satisfaction, CPS Ontology
\end{keywords}

\section{Introduction} 
The utility (potable water, wastewater) distribution systems, the electric power grid, the transportation network, automated driving systems (ADS), hospital robots, and smart-home systems are a few examples of cyber-physical systems (CPS)\footnote{For brevity, we use CPS to stand for both the plural and the singular cyber-physical system.} that are (or soon to be) a part of our daily life. Before any CPS is deployed into the real-world, several concerns need to be investigated and addressed, e.g.,  why should someone  trust that the CPS will perform its functions safely, securely and reliably? How will such a system respond to a certain critical conditions and will that response be acceptable? In other words, evidence must be gathered and argued to be sufficient to conclude that critical properties of a CPS have been assured before its deployment. For financial and practical reasons, the validation and verification of a CPS should be done as early as possible, starting with its design. CPS are complex systems that evolve with use, requiring a principled methodology and tools for developing an assurance case before release to the market. Such a methodology and the tools for applying it are two key contributions of this paper. We present here a formalization of a CPS with a clearly defined semantics that enables the assessment of critical system properties.  The need for such a foundation for assurance can be seen in the next example. 

\begin{example} 
        \label{example:conflict_type_1}
        Suppose that we would like to develop an Automated Driving System (ADS). We have two constraints that we would like to enforce: ({\em a}) packets sent from the wind-sensor, a part of the  situational awareness module ({\tt SAM}), to the main processor must be fast and reliable;  
        ({\em b}) all communication channel must be encrypted. We will refer to ({\em a}) and ({\em b}) as an {\tt \small Integrity} concern and 
        {\tt \small Encryption} concern, respectively.
        
        Consider a situation in which the ADS has only one possible communication channel, which is fast, reliable when encryption is disabled, but is not when encryption is enabled. In this situation, the two constraints are in conflict with each other. It is impossible to satisfy both of them. 
        
        Assume that we also have some preference, called {\tt \small Verification}, which is related to the verification of received data. Encrypted data would have been preferred to non-encrypted one. If the wind-sensor uses the non-encrypted socket communication, it can satisfy (or \emph{positively} affect) the {\tt \small Integrity} concern but it does not satisfy (or \emph{negatively} affect) the {\tt \small Verification} preference. 
\end{example}

In this paper, we view a CPS as a dynamic system that consists of several components with various constraints and preferences which will be referred as \emph{concerns} hereafter. Given a concrete state of the system, a concern might or might not be satisfied. We aim at laying the mathematical foundation for the study of CPS' concerns. This foundation must allow CPS  developers and practitioners to represent and reason about the concerns and answer questions such as ({\em i}) will a certain concern or a set of concerns be satisfied? ({\em ii}) is there any potential conflict between the concerns? and ({\em iii}) how can we generate the best plan that addresses an issue raised by the lack of satisfaction of a concern? Readers familiar with research in representing and reasoning about dynamic systems might wonder whether well-known formalisms for representing and reasoning about dynamic systems such as automata, action languages, Markov decision process, etc. could be used for this purpose. Indeed, our proposed framework extends these formalisms by adding a layer for modeling the components and concerns in CPS.

To achieve our goal, we propose a formalism for representing and reasoning about concerns of CPS. We will focus on the properties described in the CPS Framework (\cpsf{}) proposed by the CPS Public Working Group (CPS PWG) organized by the National Institute of Standards and Technology (NIST)  \cite{Griffor2017FrameworkFC_vol1,Griffor2017FrameworkFC_vol2,Wollman2017FrameworkFC}. This framework defines several important concepts related to CPS such as \textit{facets} (modes of the system engineering process: conceptualization, realization and assurance), \textit{concerns} (areas of concern), and \textit{aspects} (clusters of concerns: functional, business, human, trustworthiness, timing, data, composition, boundaries, and lifecycle). These concepts are organized in an ontology which is easily extensible and allows us to better manage development and implementation within, and across, multiple application domains. We formally propose the notion of a CPS system that ({\em i}) considers constraints among concerns; 
({\em ii}) enables the automatic identification of conflicts between concerns; and (\emph{iii}) enables the application of planning techniques in computing mitigation strategies. Building and establishing upon {\cpsf} are important properties of our research, which distinguish it from much of the work done on CPS so far. While most of the prior research is focused on a specific class of CPS or of aspects, e.g., CPS for smart grids or concerns related to cybersecurity \cite{bau19}, the methodology we provide is intentionally domain-independent and applicable to any class of CPS.

The paper is organized as follows. Section~\ref{section:background} presents a brief overview of the CPS framework, answer set programming, action language, and reasoning with ontologies using answer set programming. Section~\ref{section:cps_theory} contains the main contribution of the paper, a formalization of a CPS theory, which includes a specification of CPS domain and the semantics defining when a concern is satisfied. It also formally defines several reasoning tasks related to the satisfaction of concerns such as ({\em i}) when is a concern satisfied; ({\em ii}) what are the most/least trustworthy components of a CPS system; ({\em iii}) is the CPS system compliant; ({\em iv}) computing a mitigation strategy for a system when some concerns become unsatisfied; ({\em v}) which mitigation strategy has the best chance to succeed. Section~\ref{section:implementation} provides an answer set programming implementation of the tasks. The paper concludes with the discussion of the related work. The paper is arranged in a way such that it can be of interest to different groups of readers. Specifically, it separates the formal definitions of a CPS, and the reasoning tasks associated with it, from a concrete implementation of the reasoning tasks. As such, a reader only interested in the formal theories would likely be interested in Section~\ref{section:cps_theory}. On the other hand, the code in Section~\ref{section:implementation} would be of interest to readers who would like to experiment with their own CPS.    

%
%
\section{Background}
\label{section:background}

This section reviews the background notions that will be used in the paper, including the CPS ontology, answer set programming, and the use of logic programming in ontology reasoning. 

\subsection{NIST CPS\ Framework and the CPS Ontology}
\label{sub-section:bg_CPS_ontology}

One of the major challenges in designing, maintaining and operating CPS is the diversity of areas of expertise involved in these tasks, and in the structure of the CPS itself. For example, developing a ``smart ship'' \cite{mos01} involves close interaction among, and cooperation of, experts in disciplines ranging from cybersecurity to air conditioning systems and from propulsion to navigation. As demonstrated by, e.g., NASA's Mars Climate Orbiter\footnote{https://www.simscale.com/blog/2017/12/nasa-mars-climate-orbiter-metric/}, ensuring a \textit{shared understanding} of a CPS and the interoperability of its components is an essential step towards its success -- a goal that is made even more elusive by the fact that the areas of knowledge relevant to a CPS vary greatly depending to the type of CPS considered. 

For this purpose, NIST recently hosted a Public Working Group on CPS with the aim of capturing input from those involved in CPS to define a \emph{CPS reference framework} supporting common definitions and facilitating interoperability between such systems, regardless of the type of CPS considered. A key outcome of that work was the CPS Framework (Release 1.0, published as three separate NIST Special Publications \cite{Griffor2017FrameworkFC_vol1,Griffor2017FrameworkFC_vol2,Wollman2017FrameworkFC}), which proposes a means of describing three \textit{facets} during the life of a CPS: conceptualization, realization, and assurance of CPS; and to facilitate these descriptions through analytical lenses, called \textit{aspects}, which group common concerns addressed by the builders and operators of the CPS. The CPS Framework 
articulates the artifacts of a CPS in a precise way, including the concerns 
that motivate important requirements to be considered in 
conceptualizing, realizing (including operating), and assuring CPS. Albeit helpful, being a reference framework the CPS Framework only helps with the specification of a CPS and the discussion among experts. It does not, by itself, 
reduce the amount of work necessary to analyze the CPS\ and its evolution of the CPS lifecycle. 

This realization gave impulse to the investigation that ultimately resulted in the \emph{CPS Ontology} \cite{Balduccini2018OntologyBasedRA,Nguyen2020ReasoningAT}, which provides a CPS analysis methodology based on the \emph{CPS Framework} featuring a vocabulary that describes and supports the understanding and development of new and existing CPS, including those designed to interact with other CPS and function in multiple interconnected infrastructure environments. 

\begin{figure*}[htbp]
        \centering
        \includegraphics[width=1.0\textwidth]{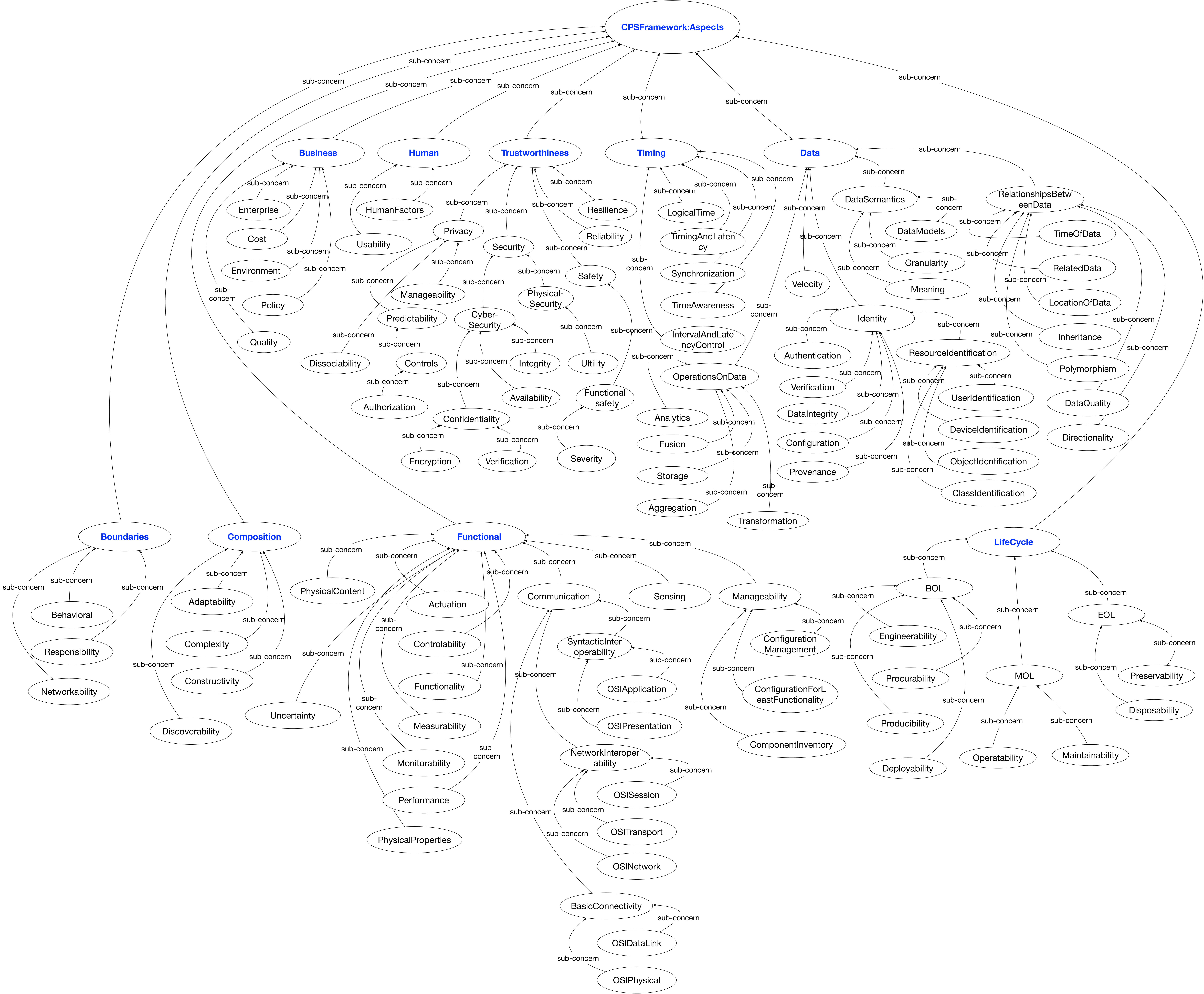}
        \caption{NIST CPS Ontology\label{fig:cps_ontology}}
\end{figure*}

At the core of the CPS Framework and of the CPS\ Ontology are the notions of domains, facets (conceptualization, realization and assurance), aspects and concerns, and a cyber-physical functional decomposition. The product of the conceptualization facet is a model of the CPS (requirements added to address prioritized concerns), the product of the realization facet is a CPS satisfying the model and the product of the assurance facet is assurance case for the prioritized set of concerns. \emph{Domains} represent the different application areas of CPS such as automated driving systems, electrical grid, etc. \emph{Concerns} are characteristics of a system that one or more of its stakeholders are concerned about. They are addressed throughout the lifecycle of a CPS, including development, maintenance, operation and disposal.
\emph{Requirements} are assertions about the state variables of a CPS aimed at addressing the concerns. The reader should note that, in line with the current {\cpsf} specification, we consider the term \emph{property} to be a synonym of requirement, and we use the two terms interchangeably in the rest of this paper.
\emph{Artifacts} are the elements of products of the facets for a CPS and include requirements, design elements, tests, and judgments.
\emph{Aspects} are the ten high-level concerns of the CPS Framework: functional, business, human, trustworthiness, timing, data, communication, boundaries, composition, and lifecycle.
        \begin{itemize}
                \item \emph{Functional} aspect is a set of concerns related to the sensing, computational, control, communications and actuation functions of the CPS.
                \item \emph{Business} aspect includes the concerns about enterprise, time to market, environment, regulation, cost, etc.
                \item \emph{Human} aspect is a set of concerns related to how a CPS is used by humans or interacts with them.
                \item \emph{Trustworthiness} aspect is a set of concerns related to the trustworthiness of CPS including security, privacy, safety, reliability, and resilience. In this paper we adopt the definition of trustworthiness from the NIST\ CPS\ Framework, where the term is taken to denote the demonstrable likelihood that the system performs according to designed behavior under any set of conditions as evidenced by its characteristics.\footnote{This is a pragmatic choice dictated by our intent to provide a formal account of the NIST CPS Framework. The debate on a universally accepted definition of trustworthiness is on-going and is beyond the scope of this paper.}  
                \item \emph{Timing} aspect: Concerns about time and frequency in CPS, including the generation and transport of time and frequency signals, time-stamping, managing latency, timing composability, etc.
                \item \emph{Data} aspect includes the concerns about data interoperability including data semantics, identify operations on data, relationships between data, and velocity of data.
                \item \emph{Communications} aspect includes the concerns about the exchange of information between components of a CPS.
                \item \emph{Boundaries} aspect is set of concerns about the interdependence among behavioral domains. Concerns related to the ability to successfully operate a CPS in multiple application area.
                \item \emph{Composition} aspect includes the concerns about the ability to compute selected properties of a component assembly from the properties of its components. Compositionality requires components that are composable: they do not change their properties in an assembly. Timing composability is particularly difficult.   
                \item \emph{Lifecycle} aspect: Concerns about the lifecycle of CPS including its components.
        \end{itemize}

The \emph{CPS Ontology} defines concepts and individuals related to concepts (with focus on \emph{Trustworthiness}) and the relationships between them (e.g.,  has-subconcern). Figure~\ref{fig:trustworthiness_2}, excluding the nodes labeled {\tt CAM}, {\tt SAM} and {\tt BAT} and links labeled ``relates'' and ``active'', shows a fragment of the CPS ontology where circle nodes represent specific concerns and grey rectangle nodes represent properties. To facilitate information sharing, the CPS Ontology leverages standards such as the Resource Description Framework (RDF\footnote{https://www.w3.org/TR/rdf-concepts/}) and the Web Ontology Language (OWL\footnote{https://www.w3.org/TR/owl-features/}) for describing the data, representing the entities and their relationships, formats for encoding the data and related metadata for sharing and fusing. An entity or relationship is defined in the ontology by an RDF-triple (\emph{subject}, \emph{predicate}, \emph{object}). Below are the main classes and relationships in the CPS ontology.


\smallskip\noindent {\em Aspects and Concerns.} The ontology defines the highest-level concept of \emph{Concern} with its refinement of \emph{Aspect}. In the concern tree in Figure~\ref{fig:cps_ontology}, the circle nodes of a concern tree represent specific concerns which are individuals of class \emph{Concern}. The root nodes of the concern tree is a particular kind of concern that is an instance of class \emph{Aspect} (subclass of \emph{Concern}). Specific concerns are represented as individuals: {\tt \small Trustworthiness} as an individual of class \emph{Aspect}, {\tt \small Security} and {\tt \small Cybersecurity} of class \emph{Concern}. Edges linking aspects and concerns are represented by the relation {\tt \small has-subconcern}. A relation {\tt \small has-subconcern} is used to associate a concern with its sub-concerns. Thus, {\tt \small Trustworthiness} aspect {\tt \small has-subconcern}  {\tt \small Security}, which in turn  {\tt \small has-subconcern} {\tt \small Cybersecurity}.

\smallskip\noindent {\em Properties.} Properties of a CPS are represented by individuals of class \emph{Property}. In the CPS Framework, a concern can be addressed by a combination of properties. An edge that links a property $p$ with an aspect or concern $c$ is represented by the relation {\tt \small addressed-by}, which says that concern $c$ is addressed by property $p$. For example in Figure~\ref{fig:trustworthiness_2} (LKAS domain), concern {\tt \small Integrity} has been addressed by some properties: {\tt \small Secure-Boot}, {\tt \small Advanced-Mode}, {\tt \small Powerful-Mode}, {\tt \small Normal-Mode} and {\tt \small Saving-Mode}.



To ease the reading, we provide a summary of the main classes and relationships in the CPS ontology 
in  Table~\ref{table:ont_class_relation}. 


%
\begin{table}[h]
\begin{center} 
\begin{tabular}{@{\extracolsep{\fill}}ll}
\topline 
\textbf{Class} & \textbf{Meaning} \\ \hline
Concern & \specialcellleft[l]{Concerns that stakeholders have w.r.t. to a system, such as \emph{security}, \emph{integrity}, etc. \\They are represented in the ontology as individuals. The link between a concern \\ and its sub-concerns is represented by the {\tt \small has-subconcern}  relation.} \\
\hline
Aspect & \specialcellleft[l]{High-level grouping of conceptually equivalent or related cross-cutting concerns \\ (i.e., \emph{human}, \emph{trustworthiness}, etc). In the ontology, \emph{Aspect} is subclass of class \\ \emph{Concern}.}  \\
\hline
Property & \specialcellleft[l]{Class of the properties relevant to a given CPS. The fact that a property addresses \\ a concern is formalized by relation {\tt \small addressed-by}.} \\
\hline
Configuration &  \specialcellleft[l]{Features of a CPS that characterize its state, e.g., if a component is on or off.\\ When property satisfaction can change at run-time, corresponding individuals \\ will be included in this class.} 
\\
\hline 
\specialcellleft[l]{Action and\\Constraint} & \specialcellleft[l]{Actions are those within the control of an agent (e.g., an operator) and those that \\ occur
spontaneously. Constraints capture dependencies among properties (e.g.,\\ mutual exclusion).
} \\ 
\botline
\textbf{Object Property} & \textbf{Meaning} \\ \hline
cpsf:hasSubCon & \specialcellleft[l]{The object property represents the {\tt \small has-subconcern} relationship between the \\ concerns.} \\ \hline 
cpsf:addrConcern & \specialcellleft[l]{The object property represents the {\tt \small addressed-by} relation between a concern \\ and a property.}
 \\
\hline 
cpsf:impactPositively & \specialcellleft[l]{The object property represents positive impact relation between a \emph{property} and a\\ \emph{concern}.}
\\
\botline
\end{tabular}
\end{center}
\caption{Main components of the CPS Ontology}
\label{table:ont_class_relation}
\vspace*{-.1in}
\end{table}

\subsection{Answer Set Programming}
Answer Set Programming (ASP)~\cite{MarekT99,Niemela99} is a declarative programming paradigm based on 
logic programming under the answer set semantics.    
A logic program $\Pi$ is a set of rules of the form:
\begin{displaymath}
c  \leftarrow a_1,\ldots,a_m, \naf b_{1},\ldots,\naf b_n
\end{displaymath}
where $c$, $a_i$'s, and $b_i$'s are literals of a propositional language\footnote{For convenience, we often use first order logic literals under the assumption that they represent all suitable ground instantiations.} and 
$\mathit{not}$ represents (default) negation. 
$c$ can be absent.  
Intuitively, a rule  states that if $a_i$'s are  believed to be true and none of the $b_i$'s is believed to be true then $c$ must be true. For a rule $r$, 
$r^\textnormal{+}$ and $r^-$, referred to as 
the \emph{positive} and \emph{negative} body, respectively, denote the sets $\{a_1,\ldots,a_m\}$ and $\{b_{1},\ldots,b_n\}$, respectively.

Let $\Pi$ be a program. An interpretation $I$ of $\Pi$ is a set of ground atoms occurring in $\Pi$.
The body of a rule $r$ is satisfied by $I$ if $r^\textnormal{+} \subseteq I$ and $r^- \cap I = \emptyset$.
A rule $r$ is satisfied by $I$ if the body of $r$ is satisfied by $I$ implies $I \models c$.  
When $c$ is absent, $r$ is a constraint and is satisfied by $I$ if its body is not satisfied by $I$. 
$I$ is a model of $\Pi$ if it satisfies all rules in $\Pi$. 

For an interpretation
$I$ and a program $\Pi$, the \emph{reduct}
of $\Pi$ w.r.t. $I$ (denoted by $\Pi^I$) is the program
obtained from $\Pi$ by deleting
{\em (i)} each rule $r$ such that $r^- \cap I \neq \emptyset$, and
{\em (ii)} all atoms of the form $\naf a$ in the bodies of the remaining rules.  
Given an interpretation $I$,
observe that the program $\Pi^I$ is a program with no occurrence of $\naf a$.
An interpretation $I$ is an \emph{answer set} \cite{GelfondL90}
of $\Pi$ if $I$ is the least model (wrt. $\subseteq$) of $\Pi^I$.   

A program $\Pi$ can have several answer sets, one answer set, or no answer set. $\Pi$ is said to be consistent if it has at least one answer set; it is inconsistent otherwise. 
Several extensions (e.g.,  \emph{choice atoms}, \emph{aggregates}, etc.) have been introduced to simplify the use of ASP. We will use and explain them when needed.
Given a program $\Pi$ and an atom $a$, we write $\Pi \models a$ to say that $a$ belongs to every answer set of $\Pi$. 
$\Pi \mid \!\!\sim  a$ to say that $a$ belongs to at least one answer set of $\Pi$.

We illustrate the concepts of answer set programming by
showing how the 3-coloring problem of a bi-directed graph $G$ can
be solved using logic programming under the answer set semantics. 
 Let the three colors be red ($r$),
blue ($b$), and green ($g$) and the vertex set of $G$ be
$\{0,1,\ldots,n\}$. Let $\Pi\left(G\right)$ be the program consisting of

\begin{itemize}
\item the set of atoms $edge\left(u,v\right)$ for every edge $\left(u,v\right)$ of $G$,

\item for each vertex  $u$ of $G$, the rule stating that $u$ must
be assigned one of the colors red, blue, or green:
$$ 1 \{color\left(u,g\right); color\left(u,r\right); color\left(u,b\right) \} 1 \leftarrow $$
This rule uses the choice atom, introduced in \cite{nie99b}, to simplify the 
use of ASP. This atom says that exactly one of the atoms 
$color\left(u,g\right)$, $color\left(u,r\right)$, and $color\left(u,b\right)$ must be true. 

\item for each edge $\left(u,v\right)$ of $G$, three rules representing the
constraint that $u$ and $v$ must have different color:
\begin{eqnarray*}
&  \leftarrow & color\left(u, r\right), color\left(v, r\right), edge\left(u,v\right) \\
&  \leftarrow & color\left(u, b\right), color\left(v, b\right), edge\left(u,v\right) \\
&  \leftarrow & color\left(u, g\right), color\left(v, g\right), edge\left(u,v\right)
\end{eqnarray*}
\end{itemize}
It can be shown that for each graph $G$, ({\it i}) $\Pi\left(G\right)$ has no answer set, i.e., is
inconsistent iff the 3-coloring problem of $G$ does not have a
solution; and ({\it ii}) if $\Pi\left(G\right)$ is consistent then each answer set of
$\Pi\left(G\right)$ corresponds to a solution of the 3-coloring problem of $G$
and vice versa.

\subsection{Action Language $\mathcal{B}$} 
\label{sub:action-language}

We review the basics of the action description language
$\mathcal{B}$ \cite{GelfondL98}. An action theory in $\mathcal{B}$ is defined over two disjoint
sets, a set of actions {\bf A} and a set of fluents {\bf F}.
A {\em fluent literal} is either a fluent $f \in \mathbf{F}$
or its negation $\neg f$.
A {\em fluent formula} is a propositional formula constructed
from fluent literals.
An action domain is a  set of laws of the following form:
\begin{align}
        \mathit{Executability \: condition{:}} & \:\quad \mathbf{executable} \:\: a  \:\: \mathbf{if} \:\: p_1,\ldots,p_n \label{exec}\\
        \mathit{Dynamic \: law{:}} & \quad a \:\: \mathbf{causes} \:\: f \:\: \mathbf{if} \:\: p_1,\ldots,p_n \label{dynamic} \\ 
        \mathit{Static \: Causal \: Law{:}} & \quad f \:\: \mathbf{if} \:\: p_1,\ldots,p_n \label{static}
\end{align}
where $f$ and $p_i$'s are fluent literals and $a$ is an
action. (\ref{exec}) encodes an executability condition of an action $a$.
Intuitively, an executability condition of the form (\ref{exec})
states that $a$ can only be executed if $p_i$'s hold. (\ref{dynamic}), referred to as a {\em dynamic causal
law}, represents the (conditional) effect of $a$. It  states that $f$ is caused to be
true after the execution of $a$ in any state of the world where
$p_1,\dots,p_n$ are true. 
When $n=0$ in \eqref{dynamic}, we often omit laws of this type from the description.  
(\ref{static}) represents a {\em static causal law}, i.e., a
relationship between fluents. It conveys that whenever the fluent literals
$p_1,\dots,p_n$ hold then so is $f$. For convenience, we sometimes denote
the set of laws of the form (\ref{static}),
(\ref{dynamic}), and (\ref{exec}) by $K$, $D_D$, and $D_E$,
respectively, for each action domain $D$.

A domain given in $\mathcal{B}$  defines a transition
function from pairs of actions and states\footnote{{\em states} are defined later} to sets of states whose
precise definition is given below. Intuitively, given an action
$a$ and a state $s$, the transition function $\Phi$ defines the
set of states $\Phi(a,s)$ that may be reached after executing the
action $a$ in state $s$. If $\Phi(a,s)$ is an empty set it means
that the execution of $a$ in $s$ results in an error.
We now formally define $\Phi$.

 Let $D$ be a domain in $\mathcal{B}$.
A set of fluent literals is said to be {\em consistent} if it
does not contain $f$ and $\neg f$ for some fluent $f$. An
{\em interpretation} $I$ of the fluents in $D$ is a maximal
consistent set of fluent literals of $D$. A fluent
$f$ is said to be true (resp. false) in $I$ iff $f \in I$ (resp.
$\neg f \in I$).
The truth value of a fluent formula in $I$ is
defined recursively over the propositional connectives in the
usual way. For example, $f \wedge g$ is true in $I$ iff $f$ is
true in $I$ and $g$ is true in $I$. We say that a formula
$\varphi$ holds in $I$ (or $I$ satisfies $\varphi$), denoted by
$I \models \varphi$, if $\varphi$ is true in $I$.

 Let $u$ be a consistent set of fluent literals and $K$ a
set of static causal laws. We say that $u$ is closed under
$K$ if for every static causal law 
$$ f \:\: \mathbf{if} \:\: p_1,\ldots,p_n$$ in $K$, if
$u \models p_1 \wedge \ldots \wedge p_n$ then $u \models f$. By
$Cl_K\left(u\right)$ we denote the least consistent set of literals from $D$
that contains $u$ and is also closed under $K$. It is worth
noting that $Cl_K\left(u\right)$ might be undefined. For instance, if $u$
contains both $f$ and $\neg f$ for some fluent $f$, then $Cl_K\left(u\right)$
cannot contain $u$ and be consistent; another example is that if
$u = \{f,g\}$ and $K$ contains 
$$ f \:\: \iif  h \quad \quad \textnormal{ and } \quad \quad \neg h \:\: \iif \:\: f, g $$ 
then $Cl_K\left(u\right)$ does not exist
because it has to contain both $h$ and $\neg h$, which means that
it is inconsistent.

 Formally, a {\em state} of $D$ is an interpretation of the
fluents in {\bf F} that is closed under the set of static causal
laws $K$ of $D$.

 An action $a$ is {\em executable} in a state $s$ if there
exists an executability proposition
$$\executable \quad a \quad \mathbf{if}  \quad f_1,\ldots,f_n $$ in $D$ such that $s \models
f_1 \wedge \ldots \wedge f_n$. Clearly, if $n=0$, then $a$ is executable in every state of $D$.
 The {\em direct effect of an action a} in a state $s$ is the
set
$$e\left(a,s\right) = \{f \mid
a \quad \causes \quad f \quad \iif  f_1,\ldots,f_n  \in D, s \models f_1 \wedge
\ldots \wedge f_n\}.
$$
 For a domain $D$,
$\Phi(a,s)$, the set of states that may be reached by executing
$a$ in $s$, is defined as follows.

\begin{enumerate}
\item If $a$ is executable in $s$, then
\[\Phi\left(a,s\right) = \{s' \; \mid \;  \; s' \mbox{ is a state and }s' =
Cl_{K}\left (e\left(a,s \right ) \cup \left (s \cap s'\right)\right)\}; \]

\item If $a$ is not executable in $s$, then $\Phi\left(a,s\right) = \emptyset$.

\end{enumerate}

Every domain $D$ in $\mathcal{B}$ has a unique
transition function $\Phi$, which we call the \emph{transition
function of $D$}. 
The transition function allows one to 
compute the set of states reached by the execution of a sequence of actions 
$\alpha = \left[a_1, \ldots ,a_n\right]$ from a state $s_0$, denoted by 
$\hat{\Phi}\left(\alpha,s_0\right)$, as follows: 

\begin{enumerate}
\item If $n=0$ then $\hat{\Phi}\left(\alpha,s_0\right) = s_0$

\item If $n> 0$ then $\hat{\Phi}\left( \alpha, s_0\right) = \cup_{u \in \Phi\left(a_1,s_0\right)} \hat{\Phi}\left( \alpha', u\right)$ where $\alpha'=[a_2,\ldots,a_n]$ and if $\hat{\Phi}\left( \alpha', u\right) = \emptyset$ 
for some $u$ then $\hat{\Phi}\left( \alpha, s_0\right) = \emptyset$.  

\end{enumerate}

\subsection{Representation and Reasoning with CPS Ontology in ASP}
\label{sub-section:bg_asp_representation}

Various researchers have explored the relationship between ASP and the Semantic Web (e.g., \cite{Eiter07,NguyenSP18A,NguyenSP18B,Nguyen2020OnRW}), in particular with the goal of leveraging existing ontologies. In these works, an ASP program is used for reasoning about classes, properties, inheritance, relations, etc. Given ASP's non-monotonic nature, it also provides sufficient flexibility for dealing in a principled way with default values, exceptions and for reasoning about the effects of actions and change. 

We use a similar approach in this paper to leverage the existing CPS Ontology for reasoning tasks related to CPS and concerns. Our approach includes the ability to query the CPS Ontology for relevant knowledge and provide it to an ASP-based reasoning component. Because the present paper is focused on the latter, for simplicity of presentation we assume that all relevant classes, instances, relations, properties of the CPS ontology are already encoded by an ASP program. We denote this program by $\Pi$($\Omega$) where $\Omega$ denotes the ontology, which is the CPS ontology in this case. 
We list the predicates that will be frequently discussed in this paper. 
\begin{itemize}
        \item  {\tt \small class(X)}:  $\mathtt{X}$ is a class;
        \item  {\tt \small subClass(X,Y)}:  $\mathtt{X}$ is a subclass of $\mathtt{Y}$;
        \item  {\tt \small aspect(I)} (resp. {\tt \small concern(I)}, {\tt \small prop(I)}, {\tt \small decomp\_func(I)}): {\tt \small I} is an individual of class aspect (resp. concern, property, decomposition function);
        \item {\tt \small subCo(I,J)}: {\tt \small J} is sub-concern of   {\tt \small I}; and
        \item {\tt \small addBy(C,P)}: concern $C$ is addressed by property $P$ (a link from a property $P$ to a concern $C$ in the ontology);
        \item {\tt \small positiveImpact(P,C)}: The satisfaction of property $P$ impacts positively on the satisfaction of concern $C$. 
        \item {\tt \small func(F,C)}: $F$ is a functional decomposition of concern $C$. 
\end{itemize}

%
\begin{lstlisting}[language=clingo,caption=$\Pi\left(\Omega\right) : $ASP program for CPS Ontology $\Omega$, label=lst:pi_Omega, mathescape=true,xleftmargin=.01\textwidth, breaklines=true][t]
class(X)      :- RDFtriple(X,"rdf:type","owl:Class").
subClass(X,Y) :- RDFtriple(X,"rdfs:subClassOf",Y), class(X), class(Y).
subClass(X,Y) :- subClass(X,Z), subClass(Z,Y).
instance(I)   :- RDFtriple(I,"rdf:type","owl:NamedIndividual").
isInstanceOf(I,X) :- instance(I), class(X), RDFtriple(I,"rdf:type",X).
isInstanceOf(I,Y) :- instance(I), class(X), class(Y), subClass(X,Y), isInstanceOf(I,X).
concern(C)     :- instance(C), isInstanceOf(C,"cpsf:Concern").
aspect(A)      :- instance(A), isInstanceOf(A,"cpsf:Aspect").
prop(P)        :- instance(P), isInstanceOf(P,"cpsf:Property").
decomp_func(F) :- instance(F), isInstanceOf(F,"cpsf:DecompFunc").
subCo(I,J) :- concern(I), concern(J), RDFtriple(I,"cpsf:hasSubCon",J).
addBy(C,P) :- prop(P), concern(C), RDFtriple(P,"cpsf:addrConcern",C).
func(F,C)  :- decomp_func(F), concern(C), RDFtriple(F,"cpsf:decompFunctionOf",C).
positiveImpact(P,C) :- concern(C), prop(P), RDFtriple(P,"cpsf:impactPositively",C).
\end{lstlisting}
Listing~\ref{lst:pi_Omega} represents the ASP program $\Pi\left(\Omega\right)$ of CPS Ontology $\Omega$. The predicate {\tt \small RDFtriple(S,P,O)} denotes the RDF triple store which has been queried and extracted from $\Omega$ by using SPARQL{\footnote{https://www.w3.org/TR/rdf-sparql-query/}}. Lines 1--2 define the {\tt \small class(X)} and {\tt \small subClass(X,Y)} based on the ontology extraction. Line 3 reasons the extension about subclass relationship. Lines 4--6 encode the definitions of {\tt \small instance(I)} and {\tt \small isInstanceOf(I,X)} with the similar method. The concern, aspect, property and decomposition function instances are defined in Lines 7--10. And, the three rules in Lines 11--14 represent the encoding of {\tt \small subCo(I,J)}, {\tt \small addBy(C,P)}, {\tt \small func(F,C)} and {\tt \small positiveImpact(P,C)} relationships respectively.

Given a collection of individuals in the CPS ontology $\Omega$, $\Pi\left(\Omega\right)$ will allow us to check  
$addBy\left(c,p\right)$, $subCo\left(i,j\right)$, $func\left(f,c\right)$, $positiveImpact\left(p,c\right)$, etc; whether a concern $c$ is addressed by a property $p$, concern $j$ is a sub-concern of concern $i$, $f$ is functional decomposition of concern $c$, the satisfaction of $p$ impacts positively on concern $c$, etc. respectively. They are written as: $\Pi\left(\Omega\right) \models addBy\left(c,p\right)$, $\Pi\left(\Omega\right) \models subCo\left(i,j\right)$, $\Pi\left(\Omega\right) \models func\left(f,c\right)$,$\Pi\left(\Omega\right) \models positiveImpact\left(p,c\right)$, etc. 


Similar rules for reasoning about the inheritance between concerns, inheritance between subconcerns and concerns, etc. are introduced whenever they are used subsequently.  
We note that the CPS framework does come with an informal semantics about when a concern is supposedly be satisfied. The work in \cite{Balduccini2018OntologyBasedRA} provides a preliminary discussion on how the satisfaction of a concern can be determined. It does not present a formal description of the CPS system as in this paper and does not address the functional decomposition issue though.

\section{CPS Theory Specification}
\label{section:cps_theory}
\subsection{Formal Definition}
\label{sub-section:cps_formal_definition}
In this section, we develop a formal definition of CPS theory and its semantics. The proposed notion of a CSP theory will allow one to specify and reason about the concerns of the CPS. Our discussion will focus on {\tt\small Trustworthiness} aspect in the CPS ontology but the proposed methodology is generic and is applicable to the full CPS ontology. To motivate the definition, we use the following example:
\begin{example} 
        [Extended from \cite{Balduccini2018OntologyBasedRA}]
        \label{example:bg_lkas_and_extend}
        Consider a lane keeping/assist system (LKAS) of an advanced car that uses a camera ({\tt CAM}) and a situational awareness module ({\tt SAM}). The SAM processes the video stream from the camera and controls the automated navigation system through a physical output. In addition, the system also has a battery ({\tt \small BAT}). 
        
        CAM and SAM may use encrypted memory ({\tt \small data\_encrypted}) and a secure boot ({\tt \small secure\_boot}). Safety mechanisms in the navigation system cause it to shut down if issues are detected in the input received from SAM. 
        The CAM and SAM can be in one of two operational modes, the basic mode ({\tt \small basic\_mode} or {\tt \small b\_mode}) and the advanced mode ({\tt \small advanced\_mode} or {\tt \small a\_mode}). The two properties address concern {\tt \small Integrity} relevant to {\tt \small operation} function. In advanced mode, the component consumes much more energy than if it were in basic mode.
        BAT serves the system energy consumption and relates with one of three properties, {\tt \small saving\_mode} ({\tt \small s\_mode}) or {\tt \small normal\_mode} ({\tt \small n\_mode}) or {{\tt \small powerful\_mode}} ({\tt \small p\_mode}). Three properties address concern {\tt \small Integrity} relevant to the {\tt \small energy} functionality.
        
        The relationship between  {\tt \small SAM}, {\tt \small CAM} and {\tt \small BAT} are: ({\bf 1}) If both {\tt \small SAM} and {\tt \small CAM} are in {\tt \small advanced\_mode}, the battery has to work in {\tt \small saving\_mode}. ({\bf 2}) if {\tt \small CAM} and {\tt \small SAM} are in {\tt \small basic\_mode}, the battery can be in {\tt \small powerful\_mode} or {\tt \small normal\_mode} and ({\bf 3}) if one of {\tt \small SAM} and {\tt \small CAM} is in {\tt \small advanced\_mode} and the other one is in {\tt \small basic\_mode}, then the battery must work in {\tt \small normal\_mode}. 
\end{example}
The relationship between the LKAS domain and the CPS ontology is shown in Figure~\ref{fig:trustworthiness_2}. Informally, the \cpsf{} defines that the concern {\tt \small Integrity} is \emph{satisfied} if {\tt \small secure\_boot} is satisfied and its two functionalities, {\tt \small operation} and {\tt \small energy},  are satisfied; the {\tt \small operation} functionality is satisfied if at least one of the properties \{{\tt \small advanced\_mode}, {\tt \small basic\_mode}\} is satisfied; and the {\tt \small energy} functionality is satisfied if there is at least one of \{{\tt \small saving\_mode}, {\tt \small normal\_mode}, {\tt \small powerful\_mode}\} properties is satisfied. Intuitively, this can be represented by the following formula:  
%
%
\begin{equation} \label{lkas_integrity} 
  \begin{array}{l}
    \left({\tt \small secure\_boot}\right) \land \left({\tt \small advanced\_mode} \lor {\tt \small basic\_mode}\right) 
    \\
    \land \left({\tt \small saving\_mode} \lor {\tt \small normal\_mode} \lor {\tt \small powerful\_mode}\right) 
  \end{array}
\end{equation}
\begin{figure*}[htbp]
        \centering
        \includegraphics[width=1.0\textwidth]{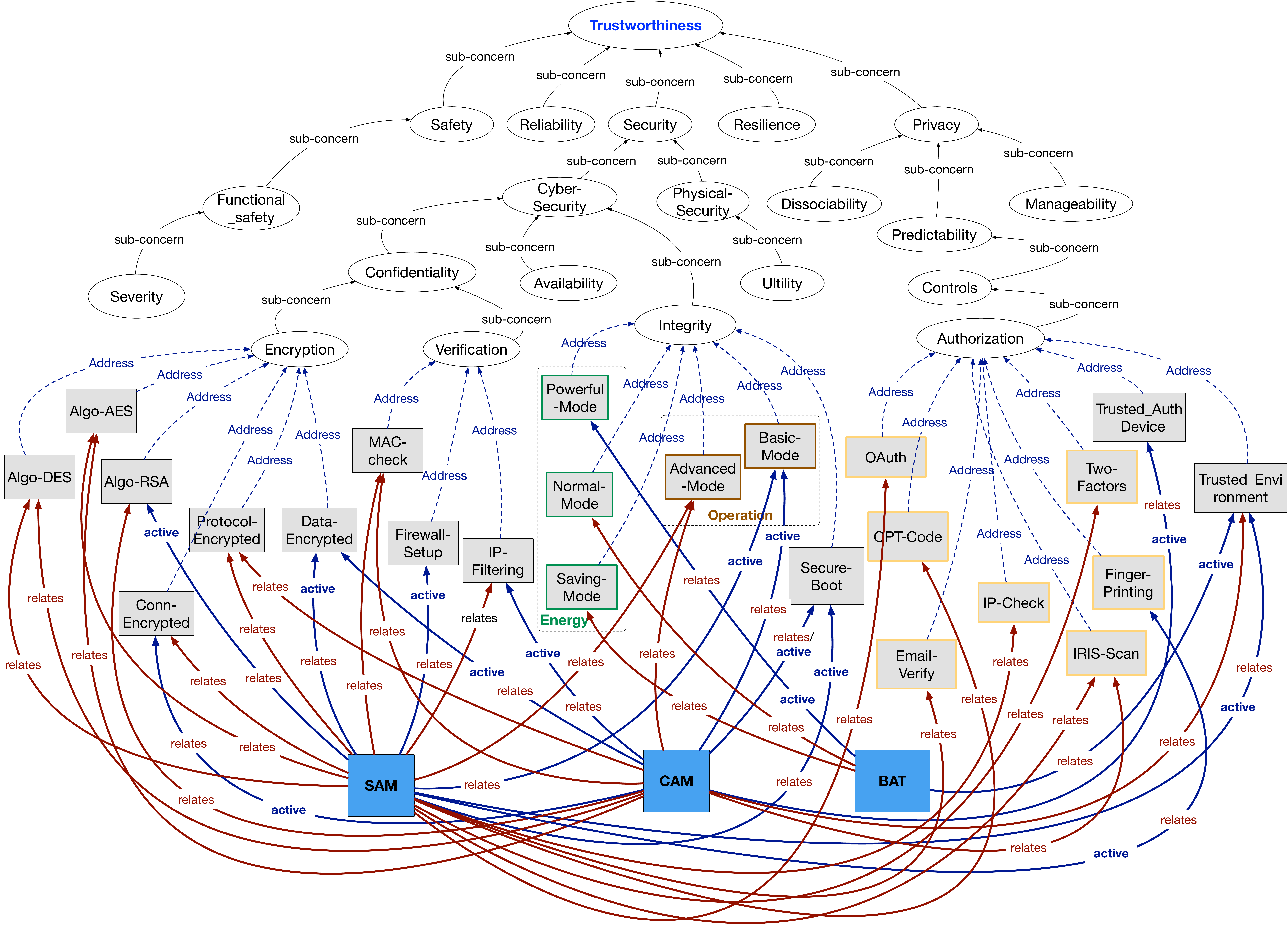}
        \caption{CPS Ontology and LKAS domain \label{fig:trustworthiness_2}}
\end{figure*}

The example shows that a CPS system is a dynamic domain and contains different components, each associated with some properties which affect the satisfaction of concerns defined in the CPS ontology. In addition, the satisfaction of concerns depends on the truth values of formulae constructed using properties and a concern might be related to a group of properties. We will write $\omega\left(c\right)$ to denote the set of properties that \emph{addresses} a concern $c$. 
We therefore define a CPS system as follows.
\begin{definition}
        [CPS System]
        \label{def:physical_CPS_system} 
        A \emph{CPS system} $\mathcal{S}$ is a tuple ($CO, A, F, R, \Gamma$) where:
        \begin{itemize} 
                \item $CO$ is a set of components; 
                \item $A$ is a set of actions that can be executed over $\mathcal{S}$; 
                \item $F$ is a finite set of fluents (or state variables) of the system;
                \item $R$ is a set of relations that maps each physical component $co \in CO$ to a set of properties $R\left(co\right)$ defined in the CPS ontology; and
                \item $\Gamma$ is a set of triples of the form  $\left(c, \mathit{fu}, \psi\right)$ where $c$ is a concern, 
                $\mathit{fu}$ is a functional decomposition of concern $c$, and $\psi$ is a formula constructed over $\omega\left(c\right)$.
        \end{itemize}
\end{definition}
In Definition~\ref{def:physical_CPS_system}, $\left(A,F\right)$ represents the dynamic domain of $\mathcal{S}$, 
$\Gamma$ represents constraints on the satisfaction of concerns in the \cpsf{} ontology in $\mathcal{S}$,
and $R$ encodes the properties of components in $\mathcal{S}$ which are related to the concerns specified in the \cpsf{}. As the truth values of these properties can be changed by actions, we assume that 
\[ 
\cup_{co \in CO} R\left(co\right) \cup \{active\left(co,p\right) \mid co\in CO, p\in R\left(co\right)\} \subseteq F. 
\] 
where $active\left(co,p\right)$ is true means that the component $co$ is currently active with property $p$. 
$\left(A,F\right)$ is an action theory  as described in Subsection~\ref{sub:action-language}. 
Note that $\left(A,F\right)$ can be non-deterministic due to the presence of statements of the form \eqref{static}. Although it is possible, this rarely happens in practical applications. We will, therefore, assume that $\left(A,F\right)$ is deterministic throughout this paper. We illustrate Definition~\ref{def:physical_CPS_system} in the following example. 

\begin{example}
        \label{example:cps_lkas}
        The CPS system in Example~\ref{example:bg_lkas_and_extend} can be described by  
        $\mathcal{S}_{lkas} = $ ($CO_{lkas}, A_{lkas}, F_{lkas}, R_{lkas}, 
        \Gamma_{lkas}$) where:
        \begin{itemize} 
                \item $CO_{lkas} = \{{\tt SAM}, {\tt CAM}, {\tt BAT}\}$.

                \item $F_{lkas}$ contains the following fluents:
                 
                \begin{itemize}
                        
                        \item $\mathtt{active\left(X,P\right)}$  denotes that component $X \in CO_{lkas}$ is working actively with property $P$, e.g., {\tt \small active(cam,basic\_mode)}, {\tt \small active(cam,data\_encrypted)}, {\tt \small active(sam,finger\_printing)} and {\tt \small active(bat,normal\_mode)} states that the camera is working in basic mode, with encrypted data, the SAM is authenticated by fingerprinting method and the battery is working in normal mode.
                        
                        \item $\mathtt{ on\left(X\right)}$ ($\mathtt{ off\left(X\right)}$) denotes that component $X$ is (isn't) ready for use.
                        
                        \item the set of properties that are related to the components ($\mathtt{P}$ denotes that the truth value of property $P$), e.g., {\tt \small basic\_mode}, {\tt \small oauth}, etc. These properties are drawn in Figure~\ref{fig:trustworthiness_2} (rectangle boxes except the three components {\tt SAM, CAM, BAT}).

                \end{itemize}

The relationship among the fluents are encoded below: 

\begin{itemize} 
\item  $active\left(BAT, saving\_mode\right) \:\:  \mathbf{if }\:\:  active\left(SAM, advanced\_mode\right), active\left(CAM, advanced\_mode\right)$ which encodes the statement 
if both {\tt \small SAM} and {\tt \small CAM} are in {\tt \small advanced\_mode}, the battery has to work in {\tt \small saving\_mode}. 

\item  $active\left(BAT, normal\_mode\right) \:\:  \mathbf{if }\:\:  active\left(SAM, advanced\_mode\right), active\left(CAM, basic\_mode\right)$ 
and \\
$active\left(BAT, normal\_mode\right) \:\:  \mathbf{if }\:\:  active\left(SAM, basic\_mode\right), active\left(CAM, advanced\_mode\right)$ 
encode the statement if one of {\tt \small SAM} and {\tt \small CAM} is in {\tt \small advanced\_mode} and the other one is in {\tt \small basic\_mode}, then the battery must work in {\tt \small normal\_mode}.

\item  $active\left(BAT, powerful\_mode\right) \lor active\left(BAT, normal\_mode\right) \:\:  \mathbf{if }\:\:  active\left(SAM, basic\_mode\right), \newline active\left(CAM, basic\_mode\right)$ which encodes the statement 
if both {\tt \small SAM} and {\tt \small CAM} are in {\tt \small basic\_mode}, the battery can be in {\tt \small powerful\_mode} or {\tt \small normal\_mode}. 


\end{itemize}

                \item $A_{lkas}$ contains the following actions:
                
                \begin{itemize} 
                
                        \item $\mathtt{switM\left(X,M\right)}$: switching the component $X$ to a mode $M$. The set of the form \eqref{exec} and \eqref{dynamic}  
                        for the action that switches the {\tt \small CAM} from {\tt basic\_mode} to {\tt advanced\_mode}  $\mathtt{ switM\left(cam, advanced\_mode\right)}$ contains the following statements: 
                        
                        \begin{itemize}
                            \item $\mathbf{executable}\:\:  \mathtt{ switM\left(cam, advanced\_mode\right)} \:\: \mathbf{if} \:\:\: \mathtt{on\left(cam\right), active
                                \left(cam, basic\_mode\right)}$ which says that the action $\mathtt{ switM\left(cam, advanced\_mode\right)}$ can only be executed if the component {\tt CAM} is on and in the {\tt basic\_mode}. 
                                
                                \item                   $\mathtt{ switM\left(cam,  advanced\_mode\right)} \:\: \mathbf{causes} \:\: \mathtt{ active\left(cam, advanced\_mode\right),}$ $\mathtt{\quad \quad \quad \quad \quad \quad \quad \quad \quad \quad \quad \quad \quad \quad \quad \quad \quad \quad \quad \quad \:\:\:\:\: \neg active\left(cam, basic\_mode\right)}$.
                                
                                This states that if we switch the component {\tt CAM} to the  {\tt advanced\_mode} then it is 
                                in the {\tt advanced\_mode} and not in the {\tt basic\_mode}. 
                                
                        \end{itemize}


                        The  statements for $\mathtt{switM\left(cam, basic\_mode\right)}$ that switches the {\tt CAM} from 
                        {\tt advanced\_mode} to {\tt basic\_mode} are similar. And the similar statements for $\mathtt{switM\left(sam, basic\_mode\right)}$
                     and $\mathtt{switM\left(sam, advanced\_mode\right)}$ which switch the component {\tt SAM} to {\tt \small basic\_mode} and {\tt \small advanced\_mode} respectively.    
                        \item There are also actions that switch other components to different modes or methods. These are: 
                        
                        \begin{itemize}
                            \item $\mathtt{switA\left(X,A\right)}$: switching between authorization methods where $X = SAM$.
                         
                            \item $\mathtt{ switV\left(X,V\right)}$: switching between verification methods where $X$ can be $SAM$ or $CAM$. 
                        
                            \item $\mathtt{ switEM\left(X,EM\right)}$: switching between encryption method  where $X$ can be $SAM$ or $CAM$.
                        
                            \item $\mathtt{ switEA\left(X,EA\right)}$: switching between encryption algorithms  where $X$ can be $SAM$ or $CAM$. 

                        \end{itemize}

                        The set of statements of the form \eqref{exec} and \eqref{dynamic} associated with these actions are similar to those associated with $\mathtt{switM\left(X,M\right)}$ and is omitted here for brevity.

                        \item $\mathtt{ tOn\left(P\right)}$ and $\mathtt{ tOff\left(P\right)}$ denote the actions of  enabling and disabling the truth value of property $P$, respectively. The sets of statements of the form \eqref{exec} and \eqref{dynamic} associated to each of these actions is similar. We list those associated with   $\mathtt{tOn\left(P\right)}$ as an example:
 
            \begin{itemize}
                \item $\mathbf{executable}\:\:  \mathtt{ tOn\left(basic\_mode\right)} \:\: \mathbf{if} \:\:\: \mathtt{\neg basic\_mode}$: this can only be executed if the system property is not in the {\tt basic\_mode}. 
                
                \item $\mathtt{ tOn\left(basic\_mode\right)} \:\: \mathbf{causes} \:\: \mathtt{ basic\_mode}$: set the system property to {\tt basic\_mode}.
                
            \end{itemize}


                        
                        
                        \item $\mathtt{ patch\left(P\right)}$ denotes action of patching some properties $P$ with available patch software. The set of statements for action $\mathtt{patch\left(P\right)}$ could be:
                        
                        {\small $\mathbf{executable}\:\:  \mathtt{ patch\left(conn\_encrypted\right)} \:\: \mathbf{if} \:\:\: \mathtt{\neg conn\_encrypted}, \mathtt{availablePatch\left(conn\_encrypted\right)}$}
                        
                        $\mathtt{ patch\left(conn\_encrypted\right)} \:\: \mathbf{causes} \:\: \mathtt{ conn\_encrypted}$
                \end{itemize}

                \item $R_{lkas} =\{${\tt CAM} $\mapsto \{{\tt \small ip\_filtering}$, ${\tt \small algo\_DES}$, ${\tt \small algo\_AES}$, ${\tt \small algo\_RSA}$, ${\tt \small data\_encrypted}$, ${\tt \small conn\_encrypted}$, ${\tt \small mac\_check}$, ${\tt \small protocol\_encrypted}$, ${\tt \small secure\_boot}$, ${\tt \small basic\_mode}$, ${\tt \small advanced\_mode}$, ${\tt \small trusted\_auth\_device}$, ${\tt \small trusted\_environment}$, ${\tt \small iris\_scan}\}$, {\tt SAM} $\mapsto \{{\tt \small data\_encrypted}$, ${\tt \small algo\_RSA}$ , ${\tt \small algo\_DES}$, ${\tt \small algo\_AES}$, ${\tt \small protocol\_encrypted}$, ${\tt \small conn\_encrypted}$, ${\tt \small firewall\_setup}$, ${\tt \small mac\_check}$, ${\tt \small ip\_filtering}$ ,${\tt \small advanced\_mode}$, ${\tt \small basic\_mode}$, ${\tt \small finger\_printing}$, ${\tt \small two\_factors}$, ${\tt \small iris\_scan}$, ${\tt \small oauth}$ , ${\tt \small opt\_code}$, ${\tt \small email\_verify}$ , ${\tt \small ip\_check}$ , ${\tt \small trusted\_environment}$ , ${\tt \small secure\_boot}\}$, {\tt BAT} $\mapsto \{{\tt powerful\_mode}$, ${\tt \small trusted\_environment}$, ${\tt \small normal\_mode}$, ${\tt \small saving\_mode}\}\}$.
                
The components and relations to the properties are illustrated by the arrow lines with ``relates'' labels in the bottom part of Figure~\ref{fig:trustworthiness_2}.

                \item $\Gamma_{lkas}$ contains the following triples (see also Figure~\ref{fig:integrity_authorization_formula}):
                
                \begin{itemize}
                        \item  ({\tt \small integrity}, {\tt \small operation}, {\tt \small advanced\_mode} $\lor$  {\tt \small basic\_mode} ) says the satisfaction of formula {\tt \small advanced\_mode} $\lor$ {\tt \small basic\_mode} addresses the concern {\tt \small integrity} in the relevant functional decomposition {\tt \small operation}.
                        
                        \item  ({\tt \small integrity}, {\tt \small energy}, {\tt \small saving\_mode} $\lor$  {\tt \small normal\_mode}  $\lor$  {\tt \small powerful\_mode}) denotes the formula {\tt \small saving\_mode} $\lor$  {\tt \small normal\_mode}  $\lor$  {\tt \small powerful\_mode} addresses the concern {\tt \small integrity} in the relevant functional decomposition {\tt \small energy}.
                        
                        \item  ({\tt \small authorization}, {\tt \small sign\_in}, {\tt \small oauth} $\land$  {\tt \small opt\_code}) denotes the satisfaction of formula {\tt \small oauth} $\land$  {\tt \small opt\_code} addresses the relevant functional decomposition {\tt \small sign\_in} of the concern {\tt \small authorization}.
                        
                        \item  ({\tt \small authorization}, {\tt \small sign\_in}, {\tt \small two\_factors} $\lor$  {\tt \small finger\_printing} $\lor$  {\tt \small iris\_scan}) denotes the formula {\tt \small two\_factors} $\lor$  {\tt \small finger\_printing} $\lor$  {\tt \small iris\_scan} addresses the concern {\tt \small authorization} in the relevant functional decomposition {\tt \small sign\_in}.
                        
                        
                        \item  ({\tt \small authorization}, {\tt \small sign\_in}, {\tt \small oauth} $\land$  {\tt \small ip\_check} $\land$  {\tt \small email\_verify}) denotes that the concern {\tt \small authorization} with the relevant functional decomposition {\tt \small sign\_in} is addressed by formula {\tt \small oauth} $\land$  {\tt \small ip\_check} $\land$  {\tt \small email\_verify}.
                        
                \end{itemize}
In addition, the functional decomposition of the {\tt \small Integrity} concern indicates that the formula $\left({\tt \small secure\_boot}\right) \land \left({\tt \small advanced\_mode} \lor {\tt \small basic\_mode}\right) \land \left({\tt \small saving\_mode} \lor {\tt \small normal\_mode} \lor {\tt \small powerful\_mode}\right)$ addresses the {\tt \small Integrity} concern.

Likewise, the formula 
\[
\begin{array}{l} 
{\tt trusted\_auth\_device} \land {\tt \small trusted\_environment} \land \\
\left({\tt  two\_factors} \lor {\tt  finger\_printing} \lor {\tt iris\_scan} \right. \lor \\
\left({\tt oauth} \land {\tt opt\_code}\right) \lor 
\left. \left( {\tt oauth} \land {\tt ip\_check} \land {\tt email\_verify}\right)\right)
\end{array} 
\]
addresses the {\tt \small Authorization} concern.
        \end{itemize}
\end{example}

\begin{figure}
        \centering
        \includegraphics[width=1.0\textwidth]{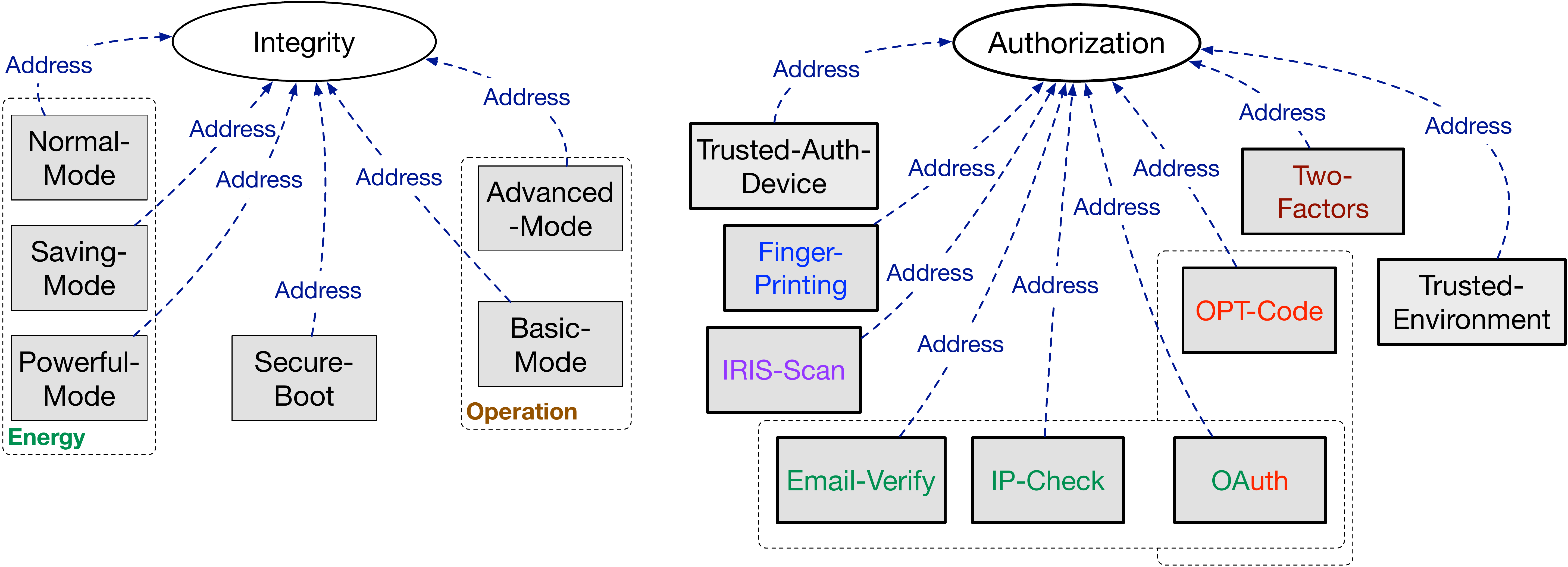}
        \caption{{\tt \small Integrity} and {\tt \small Authorization} concerns with their Functionalities and Properties \label{fig:integrity_authorization_formula}}
\end{figure}
%

%
%
Given a CPS system $\mathcal{S}$ with a set of fluents $F$, a \emph{state} $s$ of $\mathcal{S}$ is an interpretation of $F$ that satisfies the set of static causal laws of the form \eqref{static}
(Subsection~\ref{sub:action-language}). 
%
\begin{definition}
        [CPS Theory]
        \label{def:CPS_theory} 
        A \emph{CPS theory} is a pair $\left(\mathcal{S},I\right)$ where $\mathcal{S}$ is a CPS system and $I$ is a state representing the \emph{initial configuration} of $\mathcal{S}$. 
\end{definition}
%
%

\subsection{The Semantics of CPS Theories}
\label{sub-section:cps_theory_semantics}

Given $\left(\mathcal{S},I\right)$ where $\mathcal{S}=\left(CO,A,F,R,\Gamma\right)$, the action domain $\left(A,F\right)$ specifies a transition function $\Phi_\mathcal{S}$ between states (Subsection~\ref{sub:action-language}). In each state, the satisfaction of a particular concern in the CPSF is evaluated using the relationship $R$ and the components $C$. We will define this relation next. First, we note that a concern in a CPS can be related to some components in   $\mathcal{S}$, directly through the $R$ relation and the formulae in $\Gamma$ or indirectly through the inheritance in the CPS ontology. 
Observe that the development of the CPS relies on the following intuition: 

\begin{itemize}
    \item A concern might have several sub-concern; 
    \item A concern might be addressed by a set of functional decompositions which are represented by Boolean formulae. 
\end{itemize}

This leads to the following informal meaning of the notion of satisfaction of a concern in a state of the CPS:  

\begin{itemize}
   \item For each concern $c$, if $\Gamma$ does not contain any tuple of the form $\left(c,fu,\psi\right)$ 
   then $c$ is satisfied in a state $s$ when 
    every of its direct subconcerns is satisfied; for example, the 
   {\tt Trustworthiness} concern is satisfied in a state $s$ of the LKAS system if its children, 
   {\tt Safety}, {\tt Reliability}, {\tt Security}, {\tt Resilience}, and {\tt Privacy}, 
   are satisfied; and every of its properties is satisfied.

   \item For each concern $c$, if $\Gamma$ contains some tuple of the form $\left(c,fu,\psi\right)$ 
    then $c$ is satisfied when $\psi_c = \wedge_{\left(c,fu,\psi\right) \in \Gamma} \psi$ is satisfied in $s$ 
    and every property $p$ related to $c$--as specified by the CPS ontology--is satisfied in $s$; for example, the {\tt \small Integrity} concern is satisfied in the state $s$ of the LKAS system if the formula \eqref{lkas_integrity} is satisfied in $s$ where {\tt\small secure\_boot} is a property related to {\tt \small Integrity} and the other conjuncts are the two disjunctions representing the two functional decomposition of {\tt \small Integrity}.
\end{itemize}

Next, we formalize precisely the notion of satisfaction of a concern. 
Let $\Lambda(c)$ be the conjunction of 
$\wedge_{\left(c,fu,\psi\right) \in \Gamma} \psi $ and 
all properties that are related to $c$ and not appearing in 
any formula of the form $\left(c,fu,\psi\right) \in \Gamma$.
For example, in formula \eqref{lkas_integrity}, the last two conjuncts are the two functional decompositions of {\tt \small Integrity} from $\Gamma_{lkas}$ and the first conjunct is a property that does not appear in any functional decomposition of {\tt \small Integrity}. 
In the following, we denote $\langle c \rangle$ is the set of descendants of $c$ such that for each $d \in \langle c \rangle$, $d$ has no sub-concern.
\begin{definition}
   \label{def-entailment-state} 
Let $s$ be a state in $\mathcal{S}=\left(CO,A,F,R,\Gamma\right)$ and $c$ be a concern.  We say that \emph{ $c$ is satisfied in $s$}, denoted by $s \models c$, if 
\begin{itemize}
    \item $s \models \Lambda(c)$; and 
    \item every sub-concern $c'$ of $c$ is satisfied by $s$. 
\end{itemize}
\end{definition}

Having defined when a concern is satisfied in a state, we can define the notion of satisfaction of a concern after the execution of a sequence of actions as follows. Recall the transition function $\Phi_\mathcal{S}$ dictates how the system changes from one state to another state and the set of states resulting from the execution of a sequence of actions $\alpha$ from a  state can be computed by $\hat{\Phi}_\mathcal{S}$. Therefore, we can define the satisfaction of a concern $c$ after 

\begin{definition}
\label{def-entailment}
Let $\left(\mathcal{S},I\right)$ be a CPS theory, $\alpha$ a sequence of actions, and $c$ a concern in the CPS Ontology. 
$c$ is \emph{satisfied} after the execution of a sequence of actions $\alpha$ from the initial state $I$, 
denoted by $\left(\mathcal{S},I\right) \models c \after \alpha$, iff 
\begin{equation} 
     \hat{\Phi}_\mathcal{S} \lo \alpha, I \lc \ne \emptyset \wedge
    \forall u \in \hat{\Phi}_\mathcal{S} \lo \alpha, I \lc. \left[u \models c \right]
\end{equation}    
\end{definition}

In the above definition, the condition $\hat{\Phi} \lo \alpha, I \lc \ne \emptyset$ guarantees that $\alpha$ is a valid sequence of actions, i.e., its execution in $I$ does not fail. The second condition is the standard definition of logical entailment.

Definitions~\ref{def-entailment-state}-\ref{def-entailment} provide the basis for us to answer questions 
related to the satisfaction of a concern in a state or after a sequence of actions is executed, i.e., the \emph{concern satisfaction} problem. In the following, we will discuss other problems that are of importance for the design and development of CPS systems.





\subsection{Reasoning Tasks in CPS}
\label{sub:section:reasoning_tasks}
%

Knowing when a concern is (is not) satisfied is very important. We now discuss the issues related to the satisfaction of concerns in a CPS. We focus on the following problems: 


\begin{enumerate}
    \item What is the most/least trustworthy\footnote{Recall that our discussion focuses on trustworthiness but it can easily be adapted to other aspects defined in the CPS ontology.} component in a CPS?   
    \item Are there non-compliance in a given CPS? How to detect non-compliance? 
    \item What to do if an (external or internal) event occurs and leads to an undesirable situation? How to recover from such situation? 
    \item What is a best or most preferred mitigation strategy for a given situation?   
\end{enumerate}

In what follows, we provide  precise formulations of the aforementioned tasks and propose solutions for them. For simplicity of presentation, we focus on discussing these questions with respect to a given state. The answers to these questions after the execution of a sequence of actions from the initial state can be defined similarly to the definition of the satisfaction of a concern via the function $\Phi$, as in Definition~\ref{def-entailment}. Our implementation covers both situations.   

\subsubsection{Most/Least Trustworthy Components}
\label{sub-section:for_most_vulner_secure}

Given $\mathcal{S}=\left(CO,A,F,R,\Gamma\right)$ and a state $s$ in $\mathcal{S}$. A component $x \in CO$ might be related to many concerns through the properties in $R\left(x\right)$, whose truth values depend on the state $s$. Recall that for each property $p$ and component $x$, $active\left(x,p\right)$ is true in $s$ indicates that component is active with property $p$ in the state $s$; furthermore, the CPS ontology contains the specification that $p$ \emph{positively} or \emph{negatively}  impacts a concern $c$. The latter are defined by the predicates $addBy\left(c,p\right)$ and  
$positiveImpact\left(p,c\right)$ in $\Omega$ (Subsection~\ref{sub-section:bg_asp_representation}). As such, when a component is active with a property, it can positively impact a concern. 
For example, in Figure~\ref{fig:trustworthiness_2} and~\ref{fig:integrity_authorization_formula}, the property {\tt \small secure\_boot} addresses the {\tt \small Integrity} concern and is described to impact positively on the satisfaction of {\tt \small Integrity} concern by $\Omega$. In the current state, the component {\tt \small SAM} is working on property {\tt \small secure\_boot}. Assuming that concern {\tt \small Integrity} is satisfied in this state, we say that component {\tt \small SAM} \emph{directly positively affects} to the {\tt \small Integrity} concern through property {\tt \small secure\_boot}.   
We say that a  component $x$ \emph{directly impacts} a concern $c$ in state $s$ through a property $p$ if the following conditions hold:  
\begin{enumerate} 
    \item $x$ works with property $p$ in state $s$; and 
    \item $p$ addresses concern $c$ and $p$ is true in $s$.  
\end{enumerate}
If $x$ directly impacts $c$ in state $s$ through $p$ and the CPS ontology specifies that 
the satisfaction of property $p$ \emph{impacts positively} on the satisfaction of $c$ 
and $c$ is satisfied in state $s$,  
then we say that $x$ \emph{directly and positively affects} $c$.  

As the notion of concern satisfaction is propagated through the sub-concern relationship, it is natural for us to define 
that component $x$ \emph{impacts} (resp. \emph{affects positively}) concern $c$ through property $p$ in a state $s$, denoted by $impact\left(x,c,s\right)$ (resp. $pos\left(x,p,c,s\right)$), if (\emph{i}) $x$ directly addresses (resp.  direct positively affects) $c$ through a property $p$; or (\emph{ii}) there exists some sub-concern $c'$ of $c$ that is addressed (resp. positively affected) by $x$. 

In the above example (see also Figure~\ref{fig:trustworthiness_2}), the component {\tt \small SAM} \emph{directly positively affects} to the {\tt \small Integrity} concern through property {\tt \small secure\_boot} then {\tt \small SAM} also \emph{affects positively} concerns {\tt \small Cyber-Security}, {\tt \small Security} and {\tt \small Trustworthiness} in the concern tree through property {\tt \small secure\_boot}.

Given a component $x$, the ratio between the number of concerns that are positively affected by $x$ and the number of concerns that are addressed by $x$ characterizes how effectively $x$ influences the system. For this reason, we will use this number to characterize the trustworthiness of components in the system. So, we define 

\begin{equation} \label{def-twc}
tw\left(x,s\right) = \frac{\Sigma_{p \in R\left(x\right)} \mid  \{c \mid s \models c   
    \wedge positiveImpact\left (p, c \right) 
    \wedge p \in s \wedge active\left(x,p \right)\}\mid 
}{
\Sigma_{p \in R\left(x\right)} \mid \{c \mid \left(s \not\models c \vee \neg positiveImpact\left (p, c \right)\right) \wedge addBy\left (c,p \right) \wedge p \in s \wedge active\left(x,p \right) \}\mid \textnormal{+} 1
}
\end{equation} 
Assume that all concerns and properties are equally important, we could compare the trustworthiness of a component $x \in CO$ with that of a component $x' \in CO$ by comparing the ratios $tw$.

\begin{definition} 
        \label{def:select_vulnerable_sercure_comp} 
        For a CPS system $\mathcal{S} = \left(CO,A,F,R,\Gamma\right)$, $x_1, x_2 \in CO$, and state $s$ of $\mathcal{S}$,
        \begin{itemize}
                \item $x_1$ is \emph{more trustworthy} than $x_2$ in $s$, denoted by $x_1 \succ_s x_2$ (or $x_2$ is \emph{less trustworthy } than $x_1$, denoted by $x_2 \prec_s x_1$),  
                if 
                \begin{itemize}
                    \item $tw\left(x_1,s\right) > tw\left(x_2,s\right)$; or
                \item 
                $tw\left(x_1,s\right) = tw\left(x_2,s\right) = 0$ 
                and $impact\left(x_1,s\right) < impact\left(x_2,s\right)$ 
                where \\
                $impact\left(x,s\right)= \Sigma_{p \in R\left(x\right)} \mid \{c \mid \left(s \not\models c \vee \neg positiveImpact\left (p, c \right)\right) \wedge addBy\left (c,p \right) \wedge p \in s \wedge active\left(x,p \right) \}\mid$. 
                \end{itemize}
                 
                \item $x_1$ is \emph{as trustworthy as} $x_2$ in $s$, denoted by $x_1 \sim_s x_2$, 
                if 
                \begin{itemize}
                    \item                 $tw\left(x_1,s\right) = tw\left(x_2,s\right) > 0$; or  
                    \item                 $tw\left(x_1,s\right) = tw\left(x_2,s\right) = 0$ and  
                        $impact\left(x_1,s\right) = impact\left(x_2,s\right)$. 
                \end{itemize}
                
        \end{itemize}
        $x_1 \succeq_s x_2$ denotes that $x_1 \succ_s x_2$ or $x_1 \sim_s x_2$.  
        $x$ is a  most (least) trustworthy component of $\mathcal{S}$ in $s$ if 
        $x \succeq_s x'$ ($x' \succeq_s x$) for every $x' \in CO$.
\end{definition}

\begin{proposition}
        \label{prop:compare}
        Let $\mathcal{S} = \left(CO,A,F,R,\Gamma\right)$  be a CPS system and $s$ be a state in $\mathcal{S}$. The 
        relation $\succeq_s$ over the components of $\mathcal{S}$ is transitive, symmetric, and total.  
\end{proposition}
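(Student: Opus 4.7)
The plan is to reduce each of the three claimed properties to standard facts about orderings on the reals. Concretely, I would associate with each component $x$ a rank tuple $\tau(x,s) = (tw(x,s), 0)$ when $tw(x,s) > 0$ and $\tau(x,s) = (0, -impact(x,s))$ when $tw(x,s) = 0$, and then show by unfolding Definition~\ref{def:select_vulnerable_sercure_comp} that $x_1 \succeq_s x_2$ iff $\tau(x_1,s) \ge \tau(x_2,s)$ in the lexicographic order on $\mathbb{R}\times\mathbb{R}$. With that equivalence in hand, all three properties transfer directly from the corresponding properties of the lexicographic order.

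First I would verify the equivalence by enumerating four mutually exclusive cases: (i) $tw(x_1,s) > tw(x_2,s)$, (ii) $tw(x_1,s) < tw(x_2,s)$, (iii) $tw(x_1,s) = tw(x_2,s) > 0$, and (iv) $tw(x_1,s) = tw(x_2,s) = 0$. Cases (i)--(iii) line up directly with the two clauses of $\succ_s$ and the first clause of $\sim_s$; case (iv) is where the second coordinate of $\tau$ is designed to reproduce exactly the impact-based second clauses of $\succ_s$ and $\sim_s$. Once the equivalence is established, reflexivity follows from $\tau(x,s) \ge \tau(x,s)$, totality from the connexity of $\ge$ on $\mathbb{R}\times\mathbb{R}$ with lexicographic order, and transitivity from the transitivity of that same order.

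The only delicate subcase of the equivalence is the mixed boundary $tw(x_1,s) > 0 = tw(x_2,s)$, where I must confirm that Definition~\ref{def:select_vulnerable_sercure_comp} always gives $x_1 \succ_s x_2$ via its first clause regardless of impacts; this is immediate since $tw(x_1,s) > tw(x_2,s)$. A secondary, meta-level obstacle is the word \emph{symmetric} in the statement: taken literally $\succeq_s$ cannot be symmetric as soon as any strict $\succ_s$ pair exists, so I would interpret the intent as \emph{reflexive} (which my plan covers) and separately remark that the subrelation $\sim_s$ is in fact symmetric, so the proposition is true under the reading that $\succeq_s$ is a total preorder induced by an equivalence $\sim_s$ and a strict order $\succ_s$.
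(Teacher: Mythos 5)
Your proposal is correct, and it takes a more structured route than the paper. The paper's own proof is a two-line argument: it asserts that for any pair of components exactly one of $c_1 \succ_s c_2$, $c_2 \succ_s c_1$, $c_1 \sim_s c_2$ holds, notes $c \sim_s c$, and declares that transitivity, symmetry, and totality follow. Your embedding of each component into a rank tuple $\tau(x,s)$ ordered lexicographically on $\mathbb{R}\times\mathbb{R}$ accomplishes the same case analysis but with a real payoff: trichotomy plus reflexivity alone do \emph{not} entail transitivity, so the paper's proof as written leaves the transitivity claim unsupported, whereas your reduction inherits it directly from the transitivity of the lexicographic order. Your four-case verification of the equivalence is exhaustive (using the implicit fact that $tw(x,s)\ge 0$, which holds since it is a ratio of a nonnegative count to a positive denominator), and your treatment of the mixed boundary $tw(x_1,s)>0=tw(x_2,s)$ is exactly the point the definition's clause structure makes delicate. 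You are also right to flag that ``symmetric'' cannot be literally true of $\succeq_s$ once any strict pair exists; the paper asserts symmetry without comment, and your reading --- reflexivity of $\succeq_s$ together with symmetry of the induced equivalence $\sim_s$, making $\succeq_s$ a total preorder --- is the only interpretation under which the proposition holds. In short, your proof is sound and strictly more rigorous than the one in the paper.
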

\begin{proof}
It is easy to see that for any pair of components, either $c_1 \succ_s c_2$, $c_2 \sim_s c_1$, or $c_1 \sim_s c_2$. Furthermore, $c \sim_s c$. It follows that $\succeq_s$ is therefore transitive, symmetric, and total. 
\end{proof}


\subsubsection{Non-compliance Detection in CPS}
The design of a CPS is often subject to competing constraints from various people or organizations with different focus and type of expertise. This may result in sets of constraints that are unsatisfiable, e.g., a set of concerns cannot (never) be satisfied, giving rise to a non-compliance. Example~\ref{example:conflict_type_1} shows that there exists a situation in which competing concerns cannot be satisfied at the same time. In general, the problem is formulated as follows. 

\begin{definition}
        [Lack of Compliance]
        \label{def-conflict}
Given the CPS system $\mathcal{S} = \left(CO,A,F,R,\Gamma\right)$, an integer $n$, a set of actions $SA \subseteq A$, and a set of concerns $SC$, we say that $\mathcal{S}$ is 
        \begin{enumerate}
            \item         \emph{weakly $n$-noncompliant} wrt. $\left(SA,SC\right)$ if there exists a sequence $\alpha$ of at most $n$ actions in $SA$ and an initial state $I$, such that $\left(\mathcal{S},I\right) \not\models  c \after \alpha$ for some concern $c \in SC$.
            \item         \emph{strongly $n$-noncompliant} wrt. $\left(SA,SC\right)$ if for every sequence $\alpha$ of at most $n$ actions in $SA$ and an initial state $I$,  $\left(\mathcal{S},I\right) \not\models  c \after \alpha$ for some concern $c \in SC$. 
        \end{enumerate}
\end{definition} 
Given an integer $k$, \emph{weakly $k$-noncompliant} implies that there is a potential that some concern in the set $SC$ of concerns might not be satisfied. \emph{Strongly $k$-noncompliant} indicates that there is always some concern that cannot be satisfied. Systems that are \emph{strongly $k$-noncompliant} might need to be re-designed. 

It is easy to see that, by Definition~\ref{def-entailment}, checking whether a system is \emph{weakly $k$-noncompliant} is equivalent to identifying a plan of length $k$ or less that ``makes some concern unsatisfied.'' On the other hand, checking whether a system is \emph{strongly $k$-noncompliant} is equivalent to identifying a plan  of length less than $k$ that ``satisfies all concerns''. Since we assume that the specification language for CPS is propositional and planning for bounded plans is NP-complete, we can easily derive the following results: 

\begin{proposition}
    Given $\mathcal{S}$, $\left(SA,SC\right)$, and $k$, checking whether $\mathcal{S}$ is \emph{weakly $k$-noncompliant} is NP-complete and checking whether $\mathcal{S}$ is \emph{strongly $k$-noncompliant} is co-NP-complete. 
\end{proposition}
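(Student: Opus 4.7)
The plan is to establish membership and hardness separately for each direction, exploiting the deterministic transition function $\Phi_\mathcal{S}$ and the bottom-up evaluation of concern satisfaction given in Definition~\ref{def-entailment-state}. For weak $k$-noncompliance the NP-membership witness I would use is a triple $(I,\alpha,c)$ where $I$ is an interpretation of $F$ closed under the static causal laws of $\mathcal{S}$, $\alpha$ is a sequence of at most $k$ actions from $SA$, and $c \in SC$. Verification consists of iteratively applying $\Phi_\mathcal{S}$ to obtain $\hat{\Phi}_\mathcal{S}(\alpha,I)$, checking that it is non-empty, and then evaluating $s \models c$ by recursing over the concern tree of $\Omega$. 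Each step is polynomial in the input size, under the customary convention (implicit in the paper's citation of bounded planning as NP-complete) that $k$ is given in unary or is polynomially bounded.

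For strong $k$-noncompliance I would use the standard complement argument: its negation asserts the existence of an initial state $I$ and a sequence $\alpha$ of length at most $k$ such that every $c \in SC$ is satisfied after $\alpha$, which is again in NP via the same guess-and-check scheme (now verifying \emph{all} concerns in $SC$). Hence the original problem lies in co-NP.

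For hardness I would reduce propositional (un)satisfiability to the instances with $k=0$, so that the action and transition machinery plays no role. Given a formula $\phi$ over variables $x_1,\ldots,x_n$, I introduce one property $p_i$ per variable, set $F = \{p_1,\ldots,p_n\}$, $CO = A = \emptyset$, and impose no static causal laws, so that every truth assignment of $F$ is a legitimate state. The accompanying ontology $\Omega$ is populated with a single concern $c \in SC$ with no sub-concerns, each $p_i$ addressing $c$. To show NP-hardness of weak $k$-noncompliance take $\Gamma = \{(c,fu,\neg\phi[x_i/p_i])\}$, so that $\Lambda(c) = \neg\phi$; then weak $0$-noncompliance holds iff some state falsifies $\neg\phi$, iff $\phi$ is satisfiable. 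For co-NP-hardness of strong $k$-noncompliance take $\Lambda(c) = \phi$ symmetrically, so that strong $0$-noncompliance holds iff every state falsifies $\phi$, iff $\phi$ is unsatisfiable.

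The main delicate point I anticipate is verifying that the framework actually permits the reductions as described: I need to confirm that an arbitrary Boolean $\psi$ over the chosen properties may appear in $\Gamma$ (this follows from Definition~\ref{def:physical_CPS_system}, since $\psi$ ranges over formulas constructed over $\omega(c)$) and that the concern satisfaction check in Definition~\ref{def-entailment-state} remains polynomial once $\Omega$ is fixed. Both hold because the concern hierarchy and the formulas entering $\Lambda(c)$ are explicit parts of the input and hence of size polynomial in it.
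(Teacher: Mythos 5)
Your proposal is correct, and it is substantially more detailed than the paper's own argument, which consists of a single sentence appealing to the NP-completeness of the bounded plan-existence problem ({\sc Plan-Length}) after observing that weak (resp.\ strong) $k$-noncompliance amounts to the existence (resp.\ non-existence, under complementation) of a bounded plan achieving a concern-related goal. Your membership arguments coincide in spirit with that reduction---guess the initial state, the action sequence, and (for the weak case) the violated concern, then verify deterministically via $\hat{\Phi}_\mathcal{S}$ and the recursive evaluation of $\Lambda(c)$ over the concern tree. Where you genuinely diverge is in the hardness direction: rather than embedding a planning instance, you reduce SAT and UNSAT directly by taking $k=0$, an empty action set, no static causal laws, and a single concern whose $\Gamma$-formula is $\neg\phi$ or $\phi$ respectively, so that the existential/universal quantification over initial states does all the work. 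This buys a self-contained proof that does not require checking that arbitrary STRIPS-style planning instances embed faithfully into CPS systems, and it isolates exactly which part of the formalism carries the hardness (the Boolean formulas in $\Gamma$ together with the free choice of initial state, not the dynamics). You also make explicit the unary-$k$ (or polynomially bounded $k$) convention needed for the verification to be polynomial, which the paper leaves implicit in its citation. Both routes are sound; the paper's is shorter and situates the result within known planning complexity, while yours is more elementary and more careful about the conventions involved.
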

\begin{proof}
This relies on the fact that checking whether a planning problem has a solution of length $k$ is NP-complete (e.g., the {\sc Plan-Length} problem in \cite{GhallabNT04}). 
\end{proof}

\subsubsection{Mitigation Strategies}
\label{sub-section:compute_mitigation_strategy_theory}
%

Let $\mathcal{S} = \left(CO,A,F,R,\Gamma\right)$ be a CPS system and $s$ be a state of $\mathcal{S}$. When some concerns are \emph{unsatisfied} in $s$, we need a way to \emph{mitigate} the issue. Since the execution of actions can change the satisfaction of concerns, the mitigation of an issue can be achieved by identifying a plan that suitably changes the state of properties related to the concerns. The mitigation problem in a CPS can be defined as follows:
\begin{definition}
        [Mitigation Strategy]
        \label{def-simple}
        Let  $\mathcal{S} = \left(CO,A,F,R,\Gamma\right)$ be a CPS domain and $s$ a state in $\mathcal{S}$.  
        Let $\Sigma$ be a set of concerns in $\Omega$. A \emph{mitigation strategy} addressing $\Sigma$ is a plan $\alpha$ whose execution at the initial state $s$ results in a state $s'$ such that for every $c \in \Sigma$, $c$ is satisfied in $s'$. 
          
\end{definition}

Definition~\ref{def-simple} assumes that all plans are equal. This is often not the case in a CPS system. To illustrate this issue, 

\begin{example} 
    \label{exp:mitigation}
Consider the LKAS system in Example~\ref{example:bg_lkas_and_extend}.  The initial state $I_{lkas}$ is given by: {\tt \small CAM} and {\tt \small SAM} are in {\tt \small basic\_mode} and {\tt \small secure\_boot}, {\tt \small BAT} is in {\tt \small powerful\_mode} and every properties in $I_{lkas}$ are observed to be \emph{True}. The energy consumption constraints of {\tt \small BAT} are encoded in Listing~\ref{lst:battery_constraints}. Figure~\ref{fig:integrity_likelihood_sat} shows a fragment of the CPS theory that is related to the problem described in this example.

\begin{figure}[h]
        \centering
        \includegraphics[width=200pt]{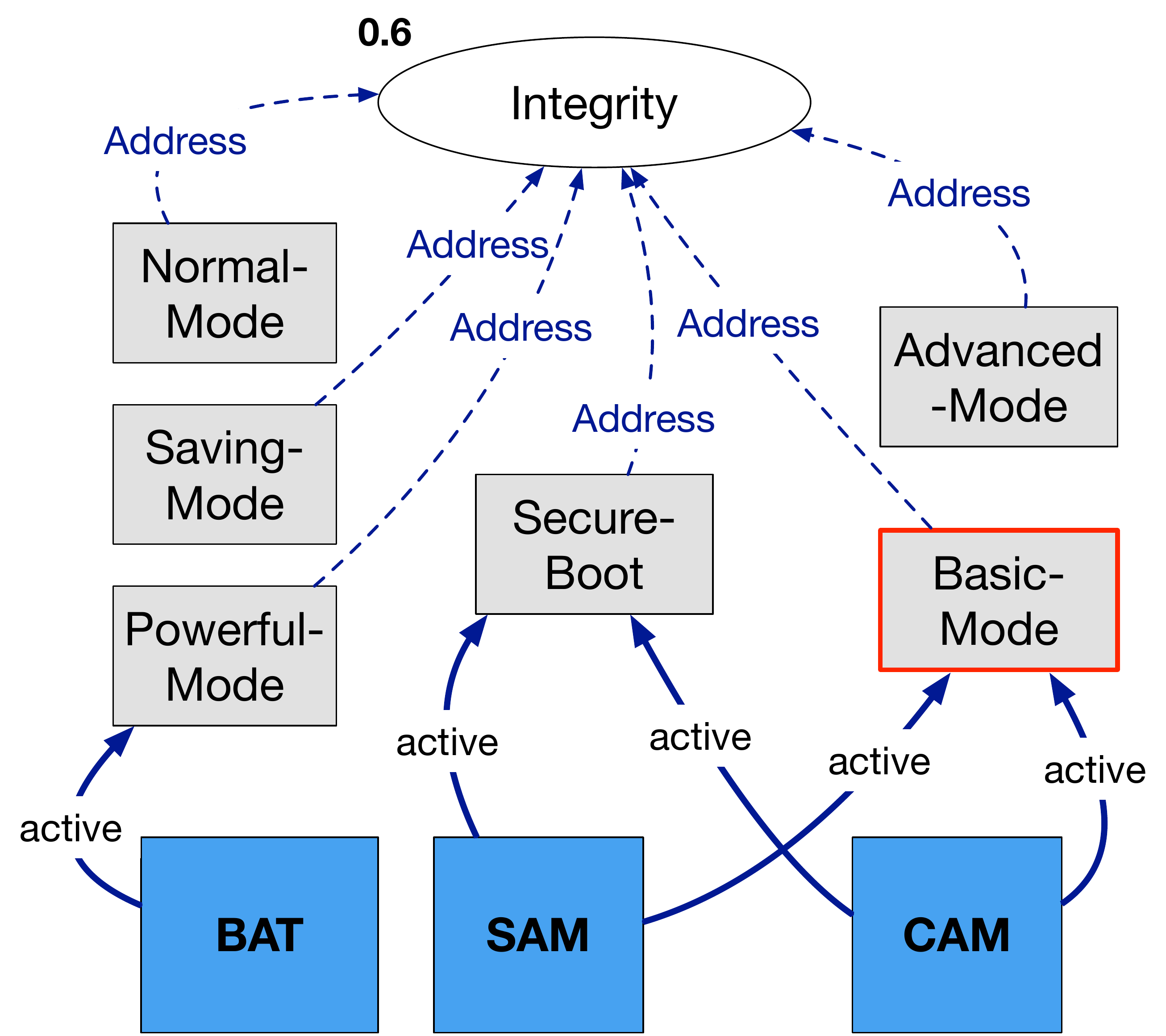}
        \caption{Current configuration of $\Delta_{lkas}$ related to {\tt \small Integrity} concern after cyber-attack\label{fig:integrity_likelihood_sat}}
\end{figure}
\begin{lstlisting}[language=clingo,caption=$\Pi^{c}_{lkas}$: Battery consumption constraints in $\Delta_{lkas}$, label=lst:battery_constraints, mathescape=true,xleftmargin=.01\textwidth, breaklines=true]
h(active(bat,saving_mode),T) :- h(active(cam,advanced_mode),T),       h(active(sam,advanced_mode),T), step(T).
1{h(active(bat,powerful_mode),T); h(active(bat,normal_mode),T)}1 :-   h(active(cam,basic_mode),T), h(active(sam,basic_mode),T), step(T).
h(active(bat,normal_mode),T) :- h(active(X,advanced_mode),T), X!=Y,   h(active(Y,basic_mode),T), step(T).
:- h(active(bat,M1),T), h(active(bat,M2),T), M1!=M2, step(T).
\end{lstlisting}

A cyber-attack occurs and the controller module is attacked, which causes {\tt \small basic\_mode} to become {\tt \small False} while {\tt \small advanced\_mode} is ({\tt \small True}). 
Given this information, we need a mitigation strategy for the set $\Sigma=\{Integrity\}$. 
The mitigation strategies (with the length is 2) can be generated as following:

\begin{list}{$\bullet$}{\itemsep=0pt \parsep=1pt \topsep=1pt \leftmargin=12pt}
        \item $\alpha_1$= $[${\tt \small tOn(basic\_mode)}$]$
        \item $\alpha_2$= $[${\tt \small switM(cam,advanced\_mode)} $,$ {\tt \small  switM(sam,advanced\_mode)}$]$
        \item $\alpha_3$= $[${\tt \small switM(sam,advanced\_mode)} $,$ {\tt \small switM(cam,advanced\_mode)}$]$
        \item $\alpha_4$= $[${\tt \small switM(sam,advanced\_mode)} $,$ {\tt \small tOn(basic\_mode)}$]$
        \item $\alpha_5$= $[${\tt \small switM(cam,advanced\_mode)} $,$ {\tt \small tOn(basic\_mode)}$]$
\end{list}
\end{example}

As shown in the example, it is desirable to identify the \emph{best} mitigation strategy. In this paper, we propose two alternatives. The first alternative relies on a notion called likelihood of satisfaction of concerns and the second alternative considers the uncertainty of actions.   

\noindent
\paragraph{\underline{Likelihood of Satisfaction (LoS) of Concerns}}

We introduce a notion called \emph{likelihood of satisfaction (LoS) of concern} and use it to distinguish mitigation strategies. Our notion relies on the positive impacts of properties on concerns within the system (Subsection~\ref{sub-section:bg_asp_representation}). For example, property {\tt \small secure\_boot} positively impacts {\tt \small Integrity} in Example~\ref{example:bg_lkas_and_extend} (denoted by {\tt \small positiveImpact(secure\_boot,integrity)}). For a concern $c$, we denote with $rel^{\textnormal{+}}\left(c\right)$ the set of all properties that positively impact a concern $c$.  
Furthermore, $rel^{\textnormal{+}}_{sat}\left(c,s\right)$ is the set of properties in $rel^\textnormal{+}\left(c\right)$ which hold in state $s$. The ratio between these two numbers can be used to characterize the \emph{positive impact degree} of concern $c$ in state $s$ as follows: 
\begin{equation}
     \label{def:pos_impacts_degree}
        deg^{\textnormal{+}}\left(c,s\right) = \left \{ 
        \begin{array}{rl}
        \dfrac{\mid rel^{\textnormal{+}}_{sat}\left(c,s\right) \mid}{\mid rel^{\textnormal{+}}\left(c\right) \mid}     &  \textnormal{if } rel^{\textnormal{+}}\left(c\right) \neq \emptyset  \\
        1     & \textnormal{otherwise}  
        \end{array}
        \right.
\end{equation}

We note that $rel^\textnormal{+}_{sat}$ and $tw$ might appear similar but they are different in the following way:   
$rel^\textnormal{+}_{sat}$ is concerned with the relationship between properties and concerns while $tw$ focuses on the relationship between components and concerns.

We define the likelihood of satisfaction of a concern as follows. 
\begin{definition}[Likelihood of Concern Satisfaction]
        \label{def:likelihood_leaf_concern}
        Given a CPS system $\mathcal{S}$, a state $s$ in $\mathcal{S}$, and a concern $c$, the likelihood of the satisfaction (LoS) of $c$ in $s$, denoted by $\varphi_{LoS}\left(c,s\right)$, is defined  by:  
        \begin{equation}
        \label{def-llh}
        \varphi_{LoS}\left(c,s\right) = \left \{
        \begin{array}{ll}
        deg^{\textnormal{+}}\left(c,s\right)*\Pi_{x \in sub\left(c\right)} \varphi_{LoS}\left(x,s\right) & \text{ if } sub\left(c\right) \neq \emptyset \\
        deg^{\textnormal{+}}\left(c,s\right) &  \text{ if } sub\left(c\right) = \emptyset
        \end{array}
        \right.
        \end{equation}
        where $sub\left(c\right)$ is the set of subconcerns of $c$. 
\end{definition}
Having defined the LoS of different concerns, we can now use this notion in comparing mitigation strategies. It is worth to mention that \cpsf{} defines nine aspect, i.e., top-level concerns, (e.g., {\tt \small trustworthiness}, {\tt \small functionality}, {\tt \small timing}, etc.). Let $TC_\Omega$ be the set of top-level concerns in the CPS ontology.  We discuss two possibilities:
\begin{itemize}
    \item \emph{Weighted LoS}:  
    Each top-level concern is associated with a number, i.e., each $c \in TC_\Omega$ is associated 
    with a weight $W_c$ (e.g., $W_{functionality}$ for {\tt \small functionality},  $W_{trustworthy}$ for  {\tt \small trustworthiness}, etc.). The
    weights represent the importance of the top-level concerns in the CPS. They can be used  
    to compute the weighted LoS of a system $\mathcal{S}$ in state $s$ 
    \begin{equation} \label{def-los-state} 
      w\left(\mathcal{S},s\right) = \Sigma_{c \in TC_\Omega} \varphi_{LoS}\left(c,s\right)*W_{c}
    \end{equation}
    This weighted LoS can be used to define a preference relation between 
    mitigation strategies such as $\beta  \prec \alpha$ ($\alpha$ is better than $\beta$)  
    iff $\max_{s' \in \Phi_\mathcal{S}(\alpha,s)} w\left(\mathcal{S},s'\right) \ge 
    \max_{s' \in \Phi_\mathcal{S}(\beta,s)} w\left(\mathcal{S},s'\right)$.

        
%
        
        \item \emph{Specified Preferences LoS}: An alternative to the weighted LoS of a system is to allow the users to specify a partial ordering over the set $TC_\Omega$ which will be used
        to define a preference relation among mitigation strategies using well-known preference aggregation strategies (e.g., lexicographic ordering). For example, if ${\tt \small Functionality >  Business}$ then a mitigation strategy $\alpha$ is better than a mitigation strategy $\beta$, 
        written as $\beta \prec \alpha$, 
        iff $\max_{s' \in \Phi_\mathcal{S}(\alpha,s)} \varphi_{LoS}\left({\tt \small Functionality},s'\right) \ge 
    \max_{s' \in \Phi_\mathcal{S}(\beta,s)} \varphi_{LoS}\left({\tt \small Business},s'\right)$ .


\end{itemize}
It is easy to see that the above preference relation $\prec$ is also transitive, symmetric, and reflexive and if some strategies exist then most preferred strategies can be computed.  

\begin{example}[Continuing from Example~\ref{exp:mitigation}]
Let us consider the strategies generated in Example~\ref{exp:mitigation}. 
All five mitigation strategies ($\alpha_1,\alpha_2,\alpha_3,\alpha_4$ and $\alpha_5$) generated in Section~\ref{sub-section:compute_mitigation_strategy} can be used to address the issue raised by the cyber-attack. 
Specifically, the fragment of final state ($G_{\alpha_i}$) relevant to {\tt \small  Integrity} concern of each plan ($\alpha_i$) is given below: 
\begin{list}{$\bullet$}{\itemsep=0pt \parsep=1pt \topsep=1pt \leftmargin=12pt}
        \item $G_{\alpha_1}$ is \{${\tt \small CAM} \mapsto {\tt \small basic\_mode}$, ${\tt \small CAM} \mapsto {\tt \small secure\_boot}$, ${\tt \small SAM} \mapsto {\tt \small basic\_mode}$, ${\tt \small SAM} \mapsto {\tt \small secure\_boot}$, ${\tt \small BAT} \mapsto {\tt \small powerful\_mode}$ \} or \{${\tt \small CAM} \mapsto {\tt \small basic\_mode}$, ${\tt \small CAM} \mapsto {\tt \small secure\_boot}$, ${\tt \small SAM} \mapsto {\tt \small basic\_mode}$, ${\tt \small SAM} \mapsto {\tt \small secure\_boot}$, ${\tt \small BAT} \mapsto {\tt \small normal\_mode}$ \}. 
        
        In which, we define $G^1_{\alpha_1}$ is \{${\tt \small CAM} \mapsto {\tt \small basic\_mode}$, ${\tt \small CAM} \mapsto {\tt \small secure\_boot}$, ${\tt \small SAM} \mapsto {\tt \small basic\_mode}$, ${\tt \small SAM} \mapsto {\tt \small secure\_boot}$, ${\tt \small BAT} \mapsto {\tt \small powerful\_mode}$ \}, and $G^2_{\alpha_1}$ is \{${\tt \small CAM} \mapsto {\tt \small basic\_mode}$, ${\tt \small CAM} \mapsto {\tt \small secure\_boot}$, ${\tt \small SAM} \mapsto {\tt \small basic\_mode}$, ${\tt \small SAM} \mapsto {\tt \small secure\_boot}$, ${\tt \small BAT} \mapsto {\tt \small normal\_mode}$ \}.
        
        \item $G_{\alpha_2}$ and $G_{\alpha_3}$: \{${\tt \small CAM} \mapsto {\tt \small advanced\_mode}$, ${\tt \small CAM} \mapsto {\tt \small secure\_boot}$, ${\tt \small SAM} \mapsto {\tt \small advanced\_mode}$, ${\tt \small SAM} \mapsto {\tt \small secure\_boot}$, ${\tt \small BAT} \mapsto {\tt \small saving\_mode}$\}
        
        \item $G_{\alpha_4}$ is \{${\tt \small CAM} \mapsto {\tt \small basic\_mode}$, ${\tt \small CAM} \mapsto {\tt \small secure\_boot}$, ${\tt \small SAM} \mapsto {\tt \small advanced\_mode}$, ${\tt \small SAM} \mapsto {\tt \small secure\_boot}$, ${\tt \small BAT} \mapsto {\tt \small normal\_mode}$\}
        
        \item $G_{\alpha_5}$ is \{${\tt \small CAM} \mapsto {\tt \small advanced\_mode}$, ${\tt \small CAM} \mapsto {\tt \small secure\_boot}$, ${\tt \small SAM} \mapsto {\tt \small basic\_mode}$, ${\tt \small SAM} \mapsto {\tt \small secure\_boot}$, ${\tt \small BAT} \mapsto {\tt \small normal\_mode}$\}
\end{list}

In each considered state, the statement $X \mapsto P$ denotes that component X is working with property $P$. For example, ${\tt \small BAT} \mapsto {\tt \small saving\_mode}$ says that the battery is working in saving mode.

Considering the five final configurations of different mitigation strategies in the example above, we have: \\
\hspace*{1cm}  $deg^{\textnormal{+}}\left(Integrity,G^1_{\alpha_1}\right)$ = 0.6, \quad $\varphi_{LoS}\left(Integrity,G^1_{\alpha_1}\right)$ = 0.6 ;\\  
\hspace*{1cm}  $deg^{\textnormal{+}}\left(Integrity,G^2_{\alpha_1}\right)$ = 0.4, \quad $\varphi_{LoS}\left(Integrity,G^2_{\alpha_1}\right)$ = 0.4 ;\\
\hspace*{1cm}  $deg^{\textnormal{+}}\left(Integrity,G_{\alpha_2}\right)$ = 0.8,  \quad $\varphi_{LoS}\left(Integrity,G_{\alpha_2}\right)$ = 0.8;\\ 
\hspace*{1cm}  $deg^{\textnormal{+}}\left(Integrity,G_{\alpha_3}\right)$ = 0.8, \quad $\varphi_{LoS}\left(Integrity,G_{\alpha_3}\right)$ = 0.8; \\
\hspace*{1cm}  $deg^{\textnormal{+}}\left(Integrity,G_{\alpha_4}\right)$ = 0.6, \quad $\varphi_{LoS}\left(Integrity,G_{\alpha_4}\right)$ = 0.6 and\\
\hspace*{1cm}  $deg^{\textnormal{+}}\left(Integrity,G_{\alpha_5}\right)$ = 0.6, \quad $\varphi_{LoS}\left(Integrity,G_{\alpha_5}\right)$ = 0.6\\
We also have that $deg^{\textnormal{+}}\left(availability,\_\right)$ = 1,  $deg^{\textnormal{+}}\left(security,\_\right)$ = 1, $deg^{\textnormal{+}}\left(trustworthiness,\_\right)$ = 1, etc. In addition, we also have the LoS values of {\tt \small trustworthiness} aspect in the five different final configurations as following:\\
\hspace*{1cm}  $\varphi_{LoS}\left(Trustworthiness,G^1_{\alpha_1}\right)$ = 0.0497, \\  
\hspace*{1cm}  $\varphi_{LoS}\left(Trustworthiness,G^2_{\alpha_1}\right)$ = 0.0331, \\ 
\hspace*{1cm}  $\varphi_{LoS}\left(Trustworthiness,G_{\alpha_2}\right)$ = 0.0662,  \\ 
\hspace*{1cm}  $\varphi_{LoS}\left(Trustworthiness,G_{\alpha_3}\right)$ = 0.0662,  \\
\hspace*{1cm}  $\varphi_{LoS}\left(Trustworthiness,G_{\alpha_4}\right)$ = 0.0497, and \\
\hspace*{1cm}  $\varphi_{LoS}\left(Trustworthiness,G_{\alpha_5}\right)$ = 0.0497.\\

Figure~\ref{fig:trust_tree_LoSC} shows the {\tt \small trustworthiness} tree for the final configurations of mitigation strategies $\alpha_2$ and $\alpha_3$ ($G_{\alpha_2}$ and $G_{\alpha_3}$), where LoS values are computed and displayed as a number at the top-left of each concern. In all 5 possible strategies, mitigation strategies $\alpha_2$ and $\alpha_3$ are also the best mitigation strategies which are especially relevant to the {\tt \small trustworthiness} attribute, where the LoS of {\tt \small trustworthiness} aspect in final state ($G_{\alpha_2}$ and $G_{\alpha_3}$) is maximum. In this figure, the LoS of {\tt \small trustworthiness} (root concern) is 0.0662 ({\tt \small llh\_sat(trustworthiness)=0.0662}). By applying a similar methodology for all remaining aspects (i.e., {\tt  business}, {\tt functional}, {\tt  timing} etc.), we can calculate LoS values for all nine aspects in CPS Ontology.
\end{example}

\begin{figure}
        \centering
        \includegraphics[width=1.0\textwidth]{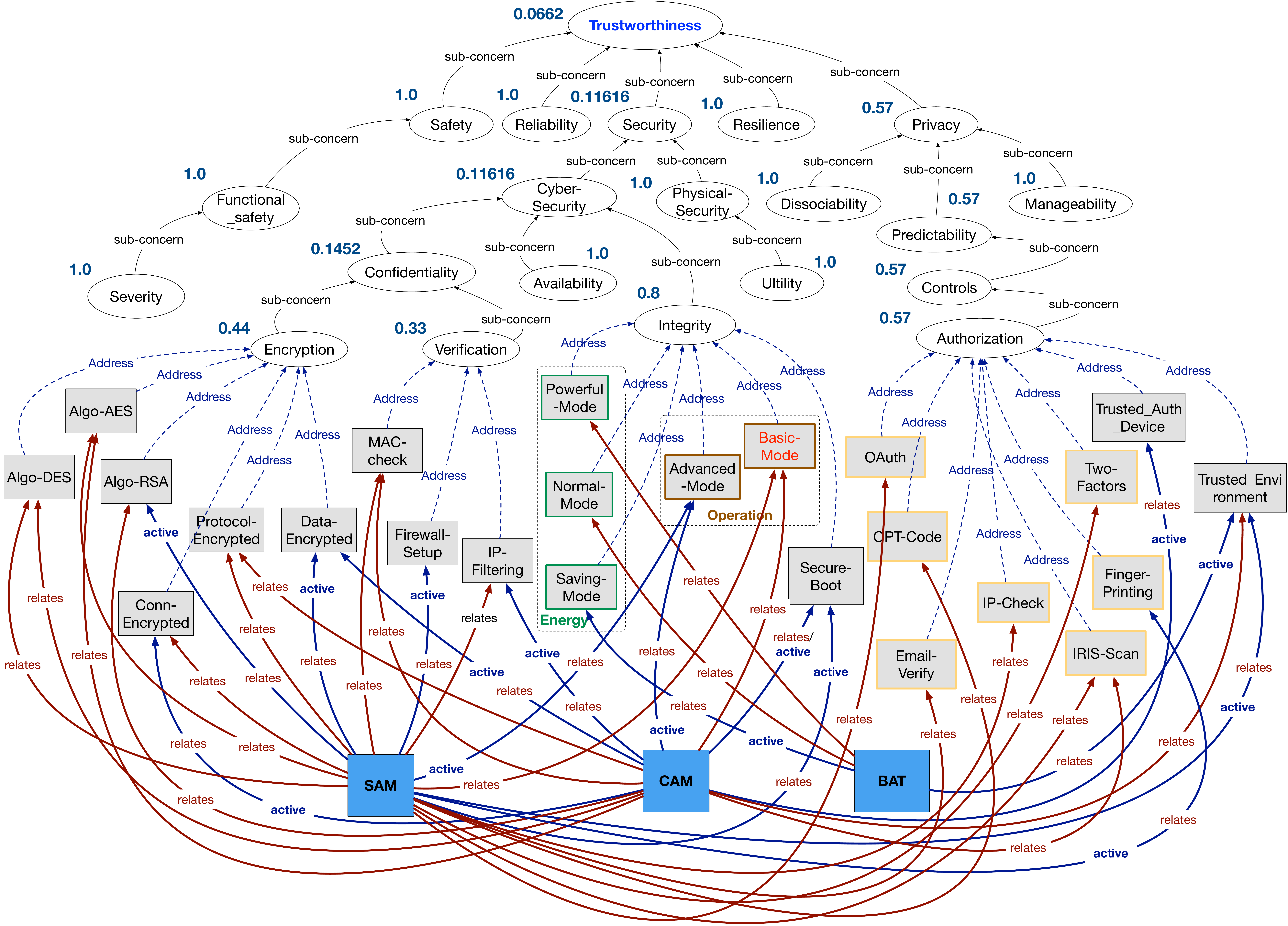}
        \caption{Trustworthiness concern tree with LoS of concerns computation \label{fig:trust_tree_LoSC}}
\end{figure}

\paragraph{\underline{Mitigation Strategy with The Best Chance to Succeed}}
\label{sub-section:for_probability_success_theo}
Preferred mitigation strategies computed using LoS of concern satisfaction assume that actions always succeeded. In practice, actions might not always succeed. In this case, it is preferable to identify strategies with the best chance of success. 
Assume that each action $a$ is associated with a set of statements of the form: 
\begin{equation} \label{prob-defined}
\quad {a \quad \prob{} \quad v \quad \mathbf{if} \quad X}
\end{equation}
where $v \in \left[0,1\right]$ and $X$ is a consistent set of literals in $\mathcal{S}$. This statement says that if each $l \in X$ is true in a state $s$ and $a$ is executable in $s$ then $v$ is the probability of $a$'s execution in $s$ succeeds. We assume that if $a$ occurs in two statements 
``$a \:\: \prob{} \:\: v_1 \:\:  \mathbf{if} \:\:  X_1$'' and 
``$a \:\: \prob{} \:\: v_2 \:\:  \mathbf{if} \:\:  X_2$'' with $X_1 \ne X_2$ then $v_1 = v_2$ or 
there exists $p \in F$ such that $\{p, \neg p\} \subseteq X_1 \cup X_2$. Furthermore, for a state $s$ in which no statement associated with some action $a$ is applicable, we assume that $a$ succeeds with probability 1 in $s$ if it is executable in $s$. It is easy to see that this set of statements defines a mapping $pr: A \times States \rightarrow \left[0,1\right]$ where $States$ denotes the set of all states of $\mathcal{S}$ and $pr\left(a,s\right)$ represents the probability that the execution of $a$ in $s$ succeeds. Thus, the execution of a sequence of actions (or a strategy) $\alpha = \left[a_0,\ldots,a_{n-1}\right]$ in a state $s$ succeeds with the probability $\Pi_{i=0}^{n-1} pr\left(a_i,s_i\right)$ where $s_0=s$, and  for $i>0$, $s_i$ is the result of the execution of $a_{i-1}$ in $s_{i-1}$. This can be used to define a preference relation between strategies similar to the use of LoS of concern satisfaction, i.e., we prefer strategies whose probability of success is maximal. We omit the formal definition here for brevity.

It is worth mentioning that the specification by statements of the form \eqref{prob-defined} is at the action level. It is assumed that if action $a$ succeeds with a probability $v$, it means that all of its potential effects will be achieved with the probability $v$. In some applications, it might be more proper to consider a finer level of probabilistic specification of effects such as if action $a$ succeeds then with a probability $p_i$, $e_i$ will be true, for $i=1,\ldots,k$. To work with this type of applications, a probabilistic action language such as the one proposed in \cite{BaralTT02} or a specification using Markov decision process could be used. We will leave the discussion related to this type of applications for the future.     



\section{An ASP-Based Implementation for Reasoning Tasks in CPS Theories}
\label{section:implementation}

This section develops an ASP encoding given a CPS theory, building on the work on planning in ASP and on formalizing CPS (e.g., ~\cite{GelfondL93,Balduccini2018OntologyBasedRA}). 
The code is available at \url{https://github.com/thanhnh-infinity/Research_CPS}.
We start with the encoding of the theory (Subsection~\ref{sub-section:asp_encoding_cps_theory}).  
Afterwards, we develop, for each reasoning task, an ASP module 
(Subsections~\ref{sub-section:concern_satisfaction}--\ref{sub-section:for_probability_success_imp}) which, when added to the encoding of the domain, will compute the answers for the task.

Throughout this section, we assume that $\left(\mathcal{S},I\right)$ where $\mathcal{S}=\left(CO,A,F,R,\Gamma\right)$ is a CPS. The encoding of $\left(\mathcal{S},I\right)$
in ASP will be denoted with $\Pi\left(\mathcal{S}\right)^n$, where $n$ is a non-negative integer representing the horizon of the system that we are interested in. We note that the encoding of the CPS ontology (Subsection~\ref{sub-section:bg_CPS_ontology} and~\ref{sub-section:bg_asp_representation}), $\Pi(\Omega)$, will be automatically added to any program developed in this section. For this reason, whenever we write 
$\Pi\left(\mathcal{S}\right)^n$ we mean 
$\Pi\left(\mathcal{S}\right)^n \cup \Pi(\Omega)$.

\subsection{ASP Encoding of a CPS Theory}
\label{sub-section:asp_encoding_cps_theory}

The encoding of a CPS theory contains two parts, one encodes the domain and another the initial state. 
We first discuss the encoding of the domain. 

\subsubsection{Encoding of the Domain $\mathcal{S}$}
\label{sub-sub-section:asp_encoding_domain}
\hfill\\
$\Pi\left(\mathcal{S}\right)^n$ contains the following rules\footnote{We follow the convention in logic programming and use strings starting with lower/uppercase letter to denote constants/variables. In addition, this program can be generated automatically given that $\mathcal{S}$ is specified in the syntax given in Section~\ref{section:cps_theory}.}.
\begin{itemize} 
        \item \emph{The set of rules declaring the time steps:}
        for each $0 \le t \le n$, an atom $step\left(t\right)$, i.e., the rule $step\left(t\right) \leftarrow $. 
        \item \emph{The set of rules encoding the components:} for each $co \in CO$, an atom $comp\left(co\right)$.
        \item \emph{The set of rules encoding actions:} for each $a \in A$, an atom $action\left(a\right)$.
        \item \emph{The set of rules encoding fluents:} for each $f \in F$, an atom $fluent\left(f\right)$.
        \item \emph{The set of rules encoding relations:}
        for each $co \in CO$ and $p \in R\left(co\right)$, an atom $relation\left(co,p\right)$.
        \item \emph{The set of rules encoding functional dependencies:}
        for each $\left(c, fu, \varphi\right) \in \Gamma$, 
        an atom $formula\left(id_\varphi\right)$, an atom $addFun\left(c,fu, id_\varphi\right)$, and a set of atoms encoding $\varphi$, 
        where $id_\varphi$ is  a unique identifier associated to $\varphi$ and $c$ is a concern.
        \item \emph{The rules for reasoning about actions and changes (see, e.g., \cite{SonBTM06})}:
        \begin{itemize} 
                \item For each executability condition of the form~\eqref{exec} the rule: \\
                $\mathtt{ exec\left(a,T\right)\:{:}{-}\:step\left(T\right),
                        \:h^*\left(p_1,T\right),\ldots,h^*\left(p_n,T\right).}$
                \item For each dynamic causal law of the form~\eqref{dynamic}: \\
                $\mathtt{ h^*\left(f,T \textnormal{\textnormal{+}} 1\right)\:{:}{-}\:step\left(T\right),\:occurs\left(a,T\right),\:}$ 
                $\mathtt{ h^*\left(p_1,T\right),\ldots,h^*\left(p_n,T\right).}$
                \item For each state constraint of the form~\eqref{static}: \\
                $\mathtt{ h^*\left(f,T\right)\:{:}{-}\:step\left(T\right),\:h^*\left(p_1,T\right),\ldots,h^*\left(p_n,T\right).}$
                \item The rules encoding the inertia axiom: \\
                $\mathtt{ h\left(f,T\textnormal{+}1\right)\:{:}{-}\:step\left(T\right), h\left(f,T\right), \naf\: \neg h\left(f,T\textnormal{+}1\right).}$ \\
                $\mathtt{ \neg h\left(f,T\textnormal{+}1\right)\:{:}{-}\:step\left(T\right), \neg h\left(f,T\right), \naf\: h\left(f,T\textnormal{+}1\right).}$ 
        \end{itemize}
        where $\mathtt{ h^*\left(x,T\right)}$ stands for $\mathtt{ h\left(x,T\right)}$ if $x  \in F$ is a fluent 
        and $\mathtt{ \neg h\left(y,T\right)}$ if $x = \neg y$ and $y \in F$. 
\end{itemize} 
We illustrate the ASP encoding of a CPS by presenting the encoding of the LKAS theory in Example~\ref{example:bg_lkas_and_extend}. Listing~\ref{lst:pi_S} shows the encoding of components, actions, and relations of $\mathcal{S}_{lkas}$ without the encoding of the initial state. 
Listing~\ref{lst:lkas_authorization_property} shows the ASP encoding for $\Gamma_{lkas}$ (see  Figure~\ref{fig:integrity_authorization_formula}). 
%
\begin{lstlisting}[language=clingo,caption=Example program $\Pi\left(\mathcal{S}_{lkas}\right)^n$ for LKAS, label=lst:pi_S, mathescape=true,xleftmargin=.01\textwidth, breaklines=true]
comp(sam). comp(cam). comp(bat). 
relation(cam,algo_AES). relation(cam,algo_RSA). 
relation(cam,algo_DES). relation(cam,ip_filtering).
relation(cam,conn_encrypted). relation(cam,data_encrypted). 
relation(cam,protocol_encrypted). relation(cam,mac_check). 
relation(cam,secure_boot). relation(cam,iris_scan).
relation(cam,advanced_mode). relation(cam,basic_mode). 
relation(cam,trusted_auth_device). relation(cam,trusted_environment).
relation(sam,algo_AES). relation(sam,algo_RSA). 
relation(sam,algo_DES). relation(sam,mac_check).
relation(sam,conn_encrypted). relation(sam,data_encrypted). 
relation(sam,ip_filtering). relation(sam,secure_boot).
relation(sam,protocol_encrypted). relation(sam,firewall_setup).
relation(sam,advanced_mode). relation(sam,basic_mode). 
relation(sam,finger_printing). relation(sam,two_factors). 
relation(sam,iris_scan). relation(sam,oauth).
relation(sam,opt_code). relation(sam,email_verify). 
relation(sam,ip_check). relation(sam,trusted_environment).
relation(bat,powerful_mode). relation(bat,normal_mode). 
relation(bat,saving_mode). relation(bat,trusted_environment).
...
action(tOn(X))  :- prop(X). action(tOff(X)) :- prop(X). 
exec(tOn(X),T)  :- $\neg$h(X,T), prop(X), step(T).
exec(tOff(X),T) :- h(X,T), prop(X), step(T).
h(X,T+1)  :- occurs(tOn(X),T), step(T).
$\neg$h(X,T+1) :- occurs(tOff(X),T), step(T).
action(patch(X)):- prop(X).
exec(patch(X),T):- prop(X), availablePatch(X), $\neg$h(X,T), step(T).
h(X,T+1)        :- occurs(patch(X),T), step(T).
...
action(switM(cam,basic_mode)). action(switM(cam,advanced_mode)).
action(switM(sam,basic_mode)). action(switM(sam,advanced_mode)).
action(switM(bat,saving_mode)). action(switM(bat,normal_mode)). 
action(switM(bat,powerful_mode)).
exec(switM(X,basic_mode),T) :- relation(X,basic_mode), 
    not h(active(X,basic_mode),T), comp(X), h(basic_mode,T), step(T).
h(active(X,basic_mode),T+1) :- occurs(switM(X,basic_mode),T), step(T).
$\neg$h(active(X,advanced_mode),T+1) :- occurs(switM(X,basic_mode),T), 
    h(active(X,advanced_mode),T), step(T).
exec(switM(X,advanced_mode),T) :-  comp(X), relation(X,advanced_mode), 
    not h(active(X,advanced_mode),T), h(advanced_mode,T), step(T).
h(active(X,advanced_mode),T+1) :- occurs(switM(X,advanced_mode),T), step(T).
$\neg$h(active(X,basic_mode),T+1):- step(T), h(active(X,basic_mode),T), occurs(switM(X,advanced_mode),T).
...
\end{lstlisting}
%
In Listing~\ref{lst:pi_S}, Line~1 encodes the components; Lines 2--20 encode the relations; 
Lines 22--29 encode the actions {\tt \small tOn} and {\tt \small tOff}. The remaining lines of code encode other actions in similar fashion. 

Each \emph{formula}  $\varphi$ related to a concern $c$ is associated with a unique identifier $\varphi^I$ and is  
converted into a CNF $\varphi_1 \wedge \ldots \wedge \varphi_k$, 
each $\varphi_i$ will be associated with a unique identifier $\varphi_i^I$. 
The set of identifiers are declared using the predicate {\tt \small formula/1}. 
It will be declared as {\tt \small disjunction} or {\tt \small conjunction}. 
Furthermore, set notation is used to encode a disjunction or conjunction, i.e.,   
the predicate {\tt \small member(X,G)} states that the formulae $X$ is a member of a disjunction or a conjunction $G$.
The predicate {\tt \small func(F,C)} states that $F$ is the functional decomposition of concern $C$. 


%

\begin{lstlisting}[language=clingo,caption=A part of ASP program $\Pi\left(\mathcal{S}_{lkas}\right)^n$ encoding $\Gamma_{lkas}$ for {\tt Integrity} and {\tt Authorization} concerns, label=lst:lkas_authorization_property, mathescape=true,xleftmargin=.01\textwidth, breaklines=true]
formula(0..3).
...
concern(integrity). 
conjunction(0). addConcern(integrity,0).  
member(secure_boot,0). member(energy_func,0). 
member(operation_func,0). 
func(operation_func,integrity). func(energy_func,integrity).
disjunction(operation_func). formula(operation_func).
member(advanced_mode,operation_func). 
member(basic_mode,operation_func).
disjunction(energy_func). formula(energy_func).
member(powerful_mode,energy_func). member(normal_mode,energy_func).
member(saving_mode,energy_func).
...
concern(authorization). 
conjunction(1). addConcern(authorization,1).
member(trusted_auth_device,1). 
member(trusted_environment,1). 
member(sign_in_func,1).
func(sign_in_func,authorization). 
disjunction(sign_in_func). 
formula(sign_in_func).
member(finger_printing,sign_in_func). 
member(iris_scan,sign_in_func).
member(two_factors,sign_in_func).
member(2,sign_in_func). member(3,sign_in_func).
conjunction(2). 
member(oauth,2). member(opt_code,2).
conjunction(3). 
member(oauth,3). member(ip_check,3). member(email_verify,3).
...
\end{lstlisting}

%
%

In Listing~\ref{lst:lkas_authorization_property}, the first line uses a special syntax, a short hand, declaring four atoms 
{\tt \small formula(0)},$\ldots$,{\tt \small formula(3)}. The declaration and encoding of the {\tt \small Integrity} concern and its related formulas, properties and decomposition functions are presented in Lines 3--13. In which, line 3 declares the concern {\tt \small Integrity}. Lines 4--6 encode the conjunctive formula ({\tt \small conjunction(0)}) that addresses the {\tt \small Integrity} concern and its membership (e.g., the property {\tt \small secure\_boot} and the two decomposition functions of the {\tt \small Integrity} concern). Line 7 specifies the two functional dependencies of the {\tt \small Integrity} concern which are 
{\tt \small operation\_func} and {\tt \small energy\_func}. Lines 8--13 specify how the formulae address the functional decompositions. Lines 8--10 declare the disjunctive formula {\tt \small operation\_func} and define the membership between properties and this formula (e.g., 
{\tt \small member(advanced\_mode,operation\_func)}, {\tt \small member(basic\_mode,operation\_func)} says that {\tt \small advanced\_mode} and {\tt \small basic\_mode} are elements of the disjunction {\tt \small operation\_func}). Similar encoding is applied for disjunctive formulae {\tt \small energy\_func} in Lines 11--13. Lines 15--30 encode information related to the {\tt \small Authorization} concern.

\subsubsection{Encoding of the Initial State}
\label{sub-sub-section:asp_encoding_initial_state}
\hfill\\
The encoding of the initial state $I$ of a CPS theory $\left(\mathcal{S},I\right)$, denoted by $\Pi\left(I\right)$, contains, for each fluent $f$, 
$h\left(f, 0\right)$ if $f$ is true in $I$ 
or $\neg h\left(f, 0\right)$ if $f$ is false in $I$.
Listing~\ref{lst:init_config_lkas} shows a snippet of the initial state of $\mathcal{S}_{lkas}$  
with Lines 1--7 specifying the true/false properties and Lines 9--17 the specific information about which components operate in which properties in LKAS in the initial state.

\begin{lstlisting}[language=clingo,caption=An example for a part of initial configuration of $\Pi\left(I_{lkas}\right)$, label=lst:init_config_lkas, mathescape=true,xleftmargin=.01\textwidth, breaklines=true]
h(finger_printing,0). h(oauth,0). h(ip_check,0). 
h(two_factors,0). h(opt_code,0).
h(trusted_auth_device,0). h(trusted_environment,0). h(secure_boot,0). 
h(powerful_mode,0). h(saving_mode,0). h(normal_mode,0).
h(basic_mode,0). h(advanced_mode,0).
...
$\neg$h(iris_scan,0). $\neg$h(email_verify,0). $\neg$h(firewall_setup,0).
...
h(active(sam,secure_boot),0). h(active(sam,algo_RSA),0). 
h(active(sam,basic_mode),0). h(active(sam,data_encrypted),0). 
h(active(sam,firewall_setup),0). h(active(sam,finger_printing),0).
h(active(sam,trusted_environment),0). 
h(active(cam,ip_filtering),0). h(active(cam,data_encrypted),0).
h(active(cam,conn_encrypted),0). h(active(cam,secure_boot),0).
h(active(cam,trusted_auth_device),0). h(active(cam,basic_mode),0).
h(active(bat,powerful_mode),0). h(active(bat,trusted_environment),0).
...
\end{lstlisting}
%

The following property (see, \cite{SonBTM06}) will be important for our discussion. It shows that 
$\Pi\left(\mathcal{S}\right)^n$ correctly computes the function $\Phi_\mathcal{S}$. 

\begin{proposition}
    Let $s$ be a state in $\mathcal{S}$. Let $\Pi = \Pi\left(\mathcal{S}\right)^1 \cup \{h^*(f, 0) \mid f \in s\}$. 
    Assume that $a$ is an action that is executable in $s$. 
    Then, $s' \in \Phi_\mathcal{S}(a, s)$ iff 
    there exists an answer set $S$ of $\Pi \cup \{occurs\left(a,0\right)\}$ 
    such that $\{h^*(f, 1) \mid f \in s'\} \subseteq A$.
\end{proposition}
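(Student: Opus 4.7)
The plan is to prove the two directions by relating states in the action theory to interpretations of the time-stamped atoms at step $1$ in answer sets of $\Pi \cup \{occurs(a,0)\}$. I will treat the stated ``$\subseteq A$'' as ``$\subseteq S$'', i.e., the answer set contains the encoding of $s'$ at time $1$.

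First I would set up the following correspondence. For any consistent set $u$ of fluent literals, let $u^t = \{h^*(\ell,t) \mid \ell \in u\}$. By construction, $\Pi$ fixes $\{h^*(f,0) \mid f\in s\}$ at time $0$, so in any answer set $S$ of $\Pi \cup \{occurs(a,0)\}$ the restriction of $S$ to time-$0$ atoms is exactly $s^0$. Note that $s$ is a state and therefore closed under the static causal laws $K$ of $\mathcal{S}$, which is consistent with the static-law rules of $\Pi(\mathcal{S})^1$ at time $0$. Since $a$ is executable in $s$, the body of the corresponding executability rule is satisfied, so $exec(a,0)$ is derivable; this is the ASP counterpart of ``$a$ is executable at state $s$''.

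For the ``$\Rightarrow$'' direction, assume $s' \in \Phi_\mathcal{S}(a,s)$. By definition, $s'$ is a state with $s' = Cl_K(e(a,s) \cup (s \cap s'))$. I would then construct the candidate interpretation $S = s^0 \cup (s')^1 \cup \{occurs(a,0)\} \cup \{exec(a,0)\} \cup \{step(0), step(1)\} \cup \Pi(\Omega)$-atoms $\cup$ the static atoms encoding $\mathcal{S}$. To show $S$ is an answer set, I would take the reduct $\Pi^S$ and verify that $S$ is the least model. The time-$1$ atoms in $S$ arise from three sources: (i) direct effects produced by the dynamic-law rules firing under $occurs(a,0)$, giving exactly $e(a,s)^1$; (ii) atoms propagated by the static-law rules at time $1$, which together with (i) yield $Cl_K(e(a,s) \cup \cdot)^1$; and (iii) atoms surviving from time $0$ via the inertia rules, which contribute precisely $(s \cap s')^1$ because the inertia rules fire exactly for those fluents whose negation is not in $S$ at time $1$. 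The fixpoint equation $s' = Cl_K(e(a,s) \cup (s \cap s'))$ is what guarantees these three contributions collectively yield $(s')^1$ and nothing more; this is the crux and is the main obstacle, because it requires simultaneously justifying inertia (a default) against the static closure.

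For the ``$\Leftarrow$'' direction, let $S$ be an answer set of $\Pi \cup \{occurs(a,0)\}$ and define $s' = \{f \in F \mid h(f,1) \in S\}$. First I would show $s'$ is an interpretation closed under $K$: consistency follows from standard answer-set properties (no fluent has both $h(f,1)$ and $\neg h(f,1)$ derivable) and closure under $K$ is immediate from the static-law rules at time $1$. Next, I would argue that $e(a,s) \subseteq s'$ from the dynamic-law rules, and that for every fluent $f$ outside the causally supported set, $f \in s'$ iff $f \in s$ by the inertia rules. This yields $s' \supseteq Cl_K(e(a,s) \cup (s \cap s'))$ and the minimality of the least model of the reduct yields the reverse inclusion, so $s' = Cl_K(e(a,s) \cup (s \cap s'))$, placing $s'$ in $\Phi_\mathcal{S}(a,s)$.

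The main obstacle throughout is the interaction between the two nonmonotonic ingredients, namely the inertia defaults and the implicit fixpoint over the static causal laws, because the target set $s'$ appears on both sides of the defining equation in Subsection~\ref{sub:action-language}. I would handle this by a standard splitting/stratification argument: split $\Pi$ by time step, so that the time-$0$ layer is fully determined by $s^0$, then reduce the time-$1$ layer with respect to inertia assumptions and show the least model of the reduct agrees with $(s')^1$ precisely when the fixpoint equation holds. This argument is essentially the one developed for action language $\mathcal{B}$ in \cite{SonBTM06}, and I would appeal to it to close the proof without redoing the inductive details.
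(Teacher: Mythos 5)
Your proposal is correct: the two-directional argument via the reduct, with the fixpoint equation $s' = Cl_{K}\left(e\left(a,s\right) \cup \left(s \cap s'\right)\right)$ mediating between the dynamic-law, static-law, and inertia contributions at step $1$, is the standard correctness argument for this encoding, and your reading of the stated ``$\subseteq A$'' as ``$\subseteq S$'' is the intended one. The paper itself supplies no proof of this proposition --- it merely cites \cite{SonBTM06} --- so your sketch, which works out the key steps and then defers the inductive details to that same source, is consistent with (and strictly more detailed than) the paper's treatment.
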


It is worth mentioning that $\Pi\left(\mathcal{S}\right)^n$ 
allows us to reason about effects of actions in the following sense: 
assume that $\left[a_0,\ldots,a_{n-1}\right]$ is a sequence of actions, then 
$\Pi\left(\mathcal{S}\right)^n  \cup \{occurs\left(a_i, i\right) \mid i=0, \ldots, n-1\}$ has an answer set $S$ if and only if 
(\emph{i}) $a_0$ is executable in the state $I$; 
(\emph{ii}) for each $i > 0$, $a_{i}$ is executable after the execution of the sequence $\left[a_0,\ldots,a_{i-1}\right]$; 
(\emph{iii}) for each $i$, the set 
$\{f \mid f\in F, h\left(f,i\right) \in S\} \cup \{\neg f  \mid f\in F, \neg h\left(f,i\right) \in S\}$
is a state of $\mathcal{S}$.
\subsection{Computing Satisfaction of Concerns}
\label{sub-section:concern_satisfaction}
We will next present a set of ASP rules for reasoning about the satisfaction of concerns as specified in Definitions~\ref{def-entailment-state}--\ref{def-entailment}. Since a concern is satisfied if \emph{all} of its functional decompositions and properties are satisfied, we define rules for computing the predicate
     $\mathtt{ h\left(sat\left(C\right),T\right)}$ which states that concern $C$ is satisfied at the step $\mathtt{T}$.
The rules are given in Listing~\ref{lst:satisfaction_reasoning}.  

\begin{lstlisting}[language=clingo,caption= $\Pi_{sat}$ : Concern Satisfaction Reasoning in $\Omega$,label=lst:satisfaction_reasoning, mathescape=true,xleftmargin=.01\textwidth, breaklines=true]
formula($\neg$G) :- formula(G). 
prop($\neg$G)     :- prop(G). 
h($\neg$F,T):- step(T), 1{formula(F);prop(F)}, $\neg$h(F,T).
h(F,T) :- step(T), formula(F), disjunction(F), member(G,F), h(G,T). 
$\neg$h(F,T):- step(T), formula(F), disjunction(F), not h(F,T). 
$\neg$h(F,T):- step(T), 1{formula(G);prop(G)}, formula(F), conjunction(F), member(G,F), not h(G,T). 
h(F,T) :- step(T), formula(F), conjunction(F), not $\neg$h(F,T).  
$\neg$h(sat(C),T) :- concern(C), addConcern(C,F), not h(F,T), step(T).
$\neg$h(sat(X),T) :- subCo(X,Y), not h(sat(Y),T), concern(X), concern(Y), step(T).
$\neg$h(sat(X),T) :- subCo(X,Y), $\neg$h(sat(Y),T), concern(X), concern(Y),      step(T).
h(sat(C),T) :- not $\neg$h(sat(C),T), concern(C), step(T).
\end{lstlisting}

The first two lines declare that the negation of a formula or a property is also a formula and thus can be a member of a disjunction or conjunction. 
The rule on Line 3 says that $\mathtt{h\left(\neg F,T\right)}$ is true if 
the negation of {\tt \small F} is true. This rule uses a special syntax {\tt \small 1\{formula(F);prop(F)\}} which says that there exists at least one {\tt \small F} is both a formula and a property. 
The rule on Line 4 states that $\mathtt{h\left(F,T\right)}$ is true if 
{\tt \small F} is a disjunction and one of its disjuncts is true. 
The next rule (Line 5) states that $\mathtt{\neg h\left(F,T\right)}$ for a disjunction {\tt \small F} is true if it cannot be proven that {\tt F} is true.  This rule applies the well-known negation-as-failure operator in establishing the truth value of $\mathtt{\neg h\left(F,T\right)}$.  
Similarly, the next two rules establish the truth value of a conjunction {\tt  F}, i.e., {\tt  h(F,T)} is true if none of its conjuncts is false.  
The remaining rules are used to establish the truth value of $\mathtt{h\left(sat\left(C\right),T\right)}$, the satisfaction of concern {\tt \small C} at step {\tt \small T}. Line 8 states that if the formula addressing the concern {\tt \small C} cannot be proven to be true then the concern is not satisfied.  Rules in line 9-10 propagate the unsatisfaction of a concern from its subconcerns. 
Finally, a concern is satisfied if it cannot be proven to be unsatisfied (Line 11).
We can prove the following proposition that relates the implementation and Definition~\ref{def-entailment-state}.

\begin{proposition}[Concern Satisfaction]
        \label{def-concern-satisfiaction}
        For a CPS theory $\Delta=\left(\mathcal{S},I\right)$ and a concern $c$, 
        $c$ is satisfied (or unsatisfied) in $I$ if $h\left(sat\left(c\right),0\right)$ (or $\neg h\left(sat\left(c\right),0\right)$) 
        belongs to every answer set of $\Pi\left(\Delta\right)$,  
        where $\Pi\left(\Delta\right)=\Pi\left(\mathcal{S}\right)^0 \cup \Pi\left(I\right) \cup \Pi_{sat}$. 
\end{proposition}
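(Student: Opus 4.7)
The plan is to prove the proposition by establishing a step-by-step correspondence between the derivations in $\Pi(\Delta)$ and the semantic conditions in Definition~\ref{def-entailment-state}. First, I would observe that since $\mathcal{S}$ is deterministic (as assumed in the paper) and no $occurs(\cdot,0)$ atoms are added, the program $\Pi(\mathcal{S})^0 \cup \Pi(I)$ has a unique answer set $M_0$ whose fluent literals at step $0$ coincide exactly with $I$, i.e., $h(f,0)\in M_0$ iff $f\in I$ and $\neg h(f,0)\in M_0$ iff $\neg f\in I$. This reduces the proposition to a claim about how $\Pi_{sat}$ propagates this initial information up through the formulae and the concern hierarchy.

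Next, I would prove by structural induction on the formula identifier $F$ (following the nesting of \texttt{member/2}) that, for every answer set $M$ of $\Pi(\Delta)$, $h(F,0)\in M$ iff $F$ evaluates to true in $I$ in the sense of $\Lambda(\cdot)$. The base case handles propositional literals and their negations via Lines~1--3 of Listing~\ref{lst:satisfaction_reasoning}: $h(\neg G,0)$ is derived exactly when $\neg h(G,0)\in M_0$. The inductive step splits on whether $F$ is declared a \texttt{disjunction} or a \texttt{conjunction}; Lines~4--5 and Lines~6--7 respectively realize the standard truth conditions, using negation-as-failure to make $\neg h(F,0)$ stable whenever no disjunct is derivable (respectively, whenever some conjunct is refutable). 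A short lemma on the stratification of \texttt{h/2} atoms with respect to \texttt{member/2} ensures that this recursion is well-founded and hence the answer set is uniquely determined on these atoms.

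I would then promote this to a statement about $h(sat(c),0)$ by induction on the depth of $c$ in the sub-concern DAG of $\Omega$ (which we may assume finite and acyclic, consistent with the CPS Ontology). For the base case, $c$ has no sub-concerns, so only Line~8 and Line~11 are relevant: $\neg h(sat(c),0)\in M$ iff the formula $F$ with $addConcern(c,F)$ fails, which by the previous induction is iff $s\not\models\Lambda(c)$; otherwise $h(sat(c),0)$ is derived by the closed-world rule on Line~11. For the inductive step, Lines~9--10 propagate $\neg h(sat(c),0)$ from any unsatisfied child, matching the second clause of Definition~\ref{def-entailment-state} by the induction hypothesis; combining with the $\Lambda$-clause from Line~8 yields the biconditional.

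The main obstacle will be handling the interaction of negation-as-failure on Lines~5, 7, and~11 cleanly: one has to argue that the program is locally stratified with respect to the layering (fluents at step $0$) $\prec$ (formulae under \texttt{member}) $\prec$ (\texttt{sat/1} atoms along the sub-concern DAG), so that there is a unique answer set and \emph{every} answer set agrees on the relevant atoms. Once stratification is established, the biconditional collapses to the standard fact that a stratified program's unique answer set coincides with the iterated fixed point, and the equivalence with Definition~\ref{def-entailment-state} follows directly; the ``if'' direction claimed in the proposition is then immediate, and in fact the stronger ``iff'' holds.
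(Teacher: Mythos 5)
Your proposal is correct and follows essentially the same route as the paper's own (much terser) proof: establish that the rules in Lines~1--7 of $\Pi_{sat}$ evaluate $\Lambda(c)$ correctly against the initial state, then propagate satisfaction through the sub-concern hierarchy via Lines~8--11. The additional machinery you supply --- the explicit structural induction on formula identifiers, the induction on depth in the sub-concern DAG, and the local-stratification argument justifying that all answer sets agree on the relevant atoms --- is a legitimate and welcome elaboration of steps the paper simply asserts as ``easy to see,'' and it also yields the stronger biconditional that the paper's proof implicitly uses.
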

\begin{proof}
It is easy to see that for any formula $\varphi$ over the fluents in $\mathcal{S}$, the encoding 
and the rules encoding a formula, and the rules in Lines 1--7, $I \models \Lambda(c)$ iff 
$\mathtt{h}(sat(\Lambda(c)^I) ,0)$  belongs to every answer set of $\Pi(\Delta)$ where $\Lambda(c)^I$ is the identifier associated to the 
formula $\Lambda(c)$. Lines 9--10 show that if $c$ has a sub-concern that is not satisfied then it is not satisfied and hence Rule 11 cannot be applied. As such, we have that 
$h(sat(c),0)$ is in an answer set of 
$\Pi(\Delta)$ iff 
the formula 
$\Lambda(c)$ is true and all sub-concerns of $c$ are satisfied in that answer set iff  
$c$ is satisfied in $I$.  
\end{proof}

Since we will be working with the satisfaction of concerns in the following sections, we will therefore need to include 
$\Pi_{sat}$ in $\Pi\left(\mathcal{S}\right)^n$. From now on, whenever we refer to $\Pi\left(\mathcal{S}\right)^n$, we mean  $\Pi\left(\mathcal{S}\right)^n \cup \Pi\left(I\right) \cup \Pi_{sat}$.

\subsection{Computing Most/Least Trustworthy Components}
\label{subsect:computing-most-least}


%
Proposition~\ref{prop:compare} shows that $\succeq_s$ has min/maximal elements, i.e., least/most trustworthy components of a system always exist. The program $\Pi_{mlt}\left(\mathcal{S}\right)$ for computing these components is listed below.
%
%
\begin{lstlisting}[language=clingo,caption=$\Pi_{mlt}$: Computing Most/Least Trustworthy  Components, label=lst:lambda2, mathescape=true,xleftmargin=.01\textwidth, breaklines=true]
r(X,P,C,T) :- comp(X), prop(P), concern(C), step(T), h(active(X,P),T), h(P,T), addBy(C,P).
pos(X,P,C,T) :- r(X,P,C,T), positiveImpact(P,C), h(sat(C),T), step(T).
nPos(X,P,C,T):- r(X,P,C,T), not positiveImpact(P,C), step(T).
nPos(X,P,C,T):- r(X,P,C,T), not h(sat(C),T), step(T).
pos(X,P,C,T) :- pos(X,P,C$_1$,T), subCo(C,C$_1$), step(T).
nPos(X,P,C,T):- nPos(X,P,C$_1$,T), subCo(C,C$_1$), step(T).
twcp(X,TW,T) :- TW=#count{C,P:pos(X,P,C,T), prop(P), concern(C)},  comp(X), step(T).
twcn(X,TW,T) :- TW=#count{C,P:nPos(X,P,C,T), prop(P), concern(C)}, comp(X), step(T).
higher(X$_1$,X$_2$,T) :- twcp(X$_1$,TWp1,T), twcp(X$_2$,TWp2,T), twcn(X$_1$,TWn1,T), twcn(X$_2$,TWn2,T), d$_1$=TWp1/(TWn1 + 1), d$_2$=TWp2/(TWn2 + 1), d$_1$ > d$_2$, step(T), TWp1!=0, TWp2!=0.
higher(X$_1$,X$_2$,T):- step(T), twcp(X$_1$,0,T), twcp(X$_2$,0,T), twcn(X$_1$,TWn1,T), twcn(X$_2$,TWn2,T), TWn1 < TWn2.
equal(X$_1$,X$_2$,T)  :- twcp(X$_1$,TWp1,T), twcp(X$_2$,TWp2,T), twcn(X$_1$,TWn1,T), twcn(X$_2$,TWn2,T), d$_1$=TWp1/(TWn1 + 1), d$_2$=TWp2/(TWn2 + 1), d$_1$ = d$_2$, step(T), TWp1!=0, TWp2!=0.
equal(X$_1$,X$_2$,T) :- step(T), twcp(X$_1$,0,T), twcp(X$_2$,0,T), twcn(X$_1$,TWn1,T), twcn(X$_2$,TWn2,T), TWn1=TWn2.
not_highestTW(X$_2$,T) :- comp(X$_1$), comp(X$_2$), higher(X$_1$,X$_2$,T), step(T).
not_lowestTW(X$_1$,T)   :- comp(X$_1$), comp(X$_2$), higher(X$_1$,X$_2$,T), step(T).
most(X,T)  :- comp(X), not not_highestTW(X,T), step(T).
least(X,T) :- comp(X), not not_lowestTW(X,T), step(T).
\end{lstlisting}

In Listing~\ref{lst:lambda2}, {\tt \small addBy(C,P)} and {\tt \small positiveImpact(P,C)} 
are defined in the program $\Pi(\Omega)$ (Subsection~\ref{sub-section:bg_asp_representation}). {\tt \small addBy(C,P)} is true means that a property $P$ addresses a concern $C$. 
{\tt \small positiveImpact(P,C)} is true means that the satisfaction of property $P$ impacts positively on the satisfaction of concern $C$. The predicate {\tt \small r(X,P,C,T)} (Line 1) encodes the relationship between $X$, $P$ and $C$ at the time $T$ which says that the component $X$ is working with the property $P$ at time $T$ and $P$ addresses concern $C$. 
The second rule (Line 2) defines the predicate {\tt \small pos(X,P,C,T)} that encodes the 
positive affected relationship between component $X$ and concern $C$ at time step $T$ through property $P$ which is true  if the concern $C$ is satisfied and {\tt \small positiveImpact(P,C)} and {\tt \small r(X,P,C,T}) hold.
Lines 3--4 define {\tt \small nPos(X,P,C,T)}, which holds at time $T$ if {\tt \small r(X,P,C,T)} holds but either {\tt \small positiveImpact(P,C)} is not defined in $\Omega$ or concern $C$ is not satisfied. This element is used for the computation of the denominator of Equation~\eqref{def-twc}. The rest of the listing defines the relationship {\tt \small higher} between components encoding the $\succeq_T$ where $T$ represents the state at the time $T$ of the system and identifying the most and least trustworthy components. 
Lines 5--6 propagate the \emph{positive affected} and \emph{impact} relations ({\tt \small pos/4}, {\tt \small nPos/4}) of a concern from its subconcerns. $twcp\left(x,tw,t\right)$ (resp. $twcn\left(x,tw,t\right)$) encodes the number of concerns positively affected (resp. impacted) by component $x$ at step $t$. The atom 
$\#\mathtt{count\{C,P:pos\left(X,C,P,T\right),prop\left(P\right),concern\left(C\right)\}}$ is an aggregate atom in ASP and 
encodes the cardinality of the set of all concerns positively impacted by $P$ and $X$.

We can show that 
the following proposition holds. 

\begin{proposition} 
        \label{prop:2}
        For a CPS theory $\Delta = \left(\mathcal{S},I\right)$ and  an answer set $S$ of program $\Pi\left(\mathcal{S}\right)^n \cup \Pi\left(I\right) \cup \Pi_{mlt}$, 
        if $most\left(x,t\right) \in S$ (resp. $least\left(x,t\right) \in S$) 
        then $x$ is a most (resp. least) trustworthy component in the state $s_t$. 
        
\end{proposition}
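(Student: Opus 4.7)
The plan is to establish correctness by working bottom-up through the predicates defined in Listing~\ref{lst:lambda2}, showing that each one faithfully encodes the corresponding notion from Definition~\ref{def:select_vulnerable_sercure_comp} and equation~\eqref{def-twc}. First, I would observe that by Proposition~\ref{def-concern-satisfiaction}, $h(sat(C),t) \in S$ iff concern $C$ is satisfied in the state $s_t$ reached by the action sequence encoded in $S$, and by the construction of $\Pi(\mathcal{S})^n$ the atom $h(active(X,P),t)$ is in $S$ iff component $X$ is active with property $P$ in $s_t$. From this I would show that the ground atom $r(X,P,C,t)$ belongs to $S$ iff the conditions for ``$X$ directly impacts $C$ through $P$ in $s_t$'' hold (both clauses of the informal definition), and analogously that $pos(X,P,C,t)$ in $S$ iff $X$ directly and positively affects $C$ through $P$ in $s_t$ (using the $positiveImpact$ predicate from $\Pi(\Omega)$).

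Next, I would handle the propagation through the subconcern hierarchy. The rules on lines 5--6 inductively lift $pos$ and $nPos$ from a subconcern to its ancestors. Using induction on the depth of the subconcern tree rooted at a concern $c$ (which is well-founded since $subCo$ in the CPS Ontology is acyclic), I would prove that $pos(X,P,c,t) \in S$ iff $X$ positively affects $c$ through $P$ in $s_t$ in the sense of Subsection~\ref{sub-section:for_most_vulner_secure}, and similarly for $nPos$ matching the $impact$ relation that feeds the denominator in~\eqref{def-twc}. The aggregate counts $twcp(X,TW,t)$ and $twcn(X,TW,t)$ then give exactly the numerator $\Sigma_{p \in R(x)} |\{c \mid \ldots\}|$ and the summand in $impact(x,s_t)$, respectively, by the standard semantics of $\#count$ over ground instances.

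Given these encodings, I would show that $higher(X_1,X_2,t) \in S$ iff $X_1 \succ_{s_t} X_2$ by case-splitting exactly as the definition does: the first rule for $higher$ covers the case $tw(X_1,s_t) > tw(X_2,s_t)$ when both numerators are nonzero (using the $d_1 = TWp1/(TWn1+1)$ division to compute $tw$), while the second rule covers the case where both $tw$ values are zero and $impact(X_1,s_t) < impact(X_2,s_t)$ (encoded as $TWn1 < TWn2$). The rules $not\_highestTW(X_2,t) \leftarrow higher(X_1,X_2,t)$ and $not\_lowestTW(X_1,t) \leftarrow higher(X_1,X_2,t)$, combined with the negation-as-failure in the definitions of $most$ and $least$, yield that $most(x,t) \in S$ iff no component is strictly higher than $x$, i.e., $x \succeq_{s_t} x'$ for every $x' \in CO$; symmetrically for $least$.

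The main obstacle I anticipate is the case where the $tw$ values are equal as real numbers but the integer division $TWp/(TWn+1)$ collapses distinct ratios into the same quotient, or vice versa. To handle this cleanly I would either restrict attention to the ``pure'' cases distinguished in Definition~\ref{def:select_vulnerable_sercure_comp} (numerator zero vs.\ nonzero) and argue soundness in the non-zero branch by noting that the proposition only requires the forward direction (if $most(x,t) \in S$ then $x$ is most trustworthy, not completeness), or explicitly reinterpret $tw$ in the statement as the integer-quotient version actually implemented; a remark would clarify which reading is used. The propagation of $nPos$ through $subCo$ also deserves care, since an unsatisfied subconcern must be shown to correctly classify the corresponding ancestor in the denominator, which follows from the downward inheritance of unsatisfaction established in Proposition~\ref{def-concern-satisfiaction}.
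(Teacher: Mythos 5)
Your proposal is correct and follows essentially the same route as the paper, whose entire proof is the single assertion that the claim ``follows immediately from the definition of the predicates $addBy$, $positiveImpact$ and the definition of aggregate functions in ASP''---you have simply carried out in full the bottom-up verification that the paper leaves implicit. Your observation about the integer division $TWp/(TWn+1)$ potentially collapsing distinct real-valued ratios $tw(x,s)$ is a genuine subtlety that the paper's one-line proof does not acknowledge, and your suggested resolution (reading the proposition with respect to the integer-quotient version of $tw$ actually implemented, or noting that only the soundness direction is claimed) is a reasonable way to close it.
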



The proof follows immediately from the definition of the predicate $addBy$, $positiveImpact$ and the definition of aggregate functions in ASP. As such, to identify the most trustworthy component of $\mathcal{S}$, we only need to compute an answer set $S$ of $\Pi\left(\mathcal{S}\right)^n \cup \Pi\left(I\right) \cup \Pi_{mlt}$ and use Proposition~\ref{prop:2}.

\begin{example} 
\label{example:most_trustworthy_1}
Consider the $\mathcal{S}_{lkas}$ domain.          
\begin{itemize}
    \item 
        Let us consider the initial configuration $I^{1}_{lkas}$ of LKAS system  where every properties are observed to be true. 
        For $\Delta_{lkas} = \left(\mathcal{S}_{lkas},I^{1}_{lkas}\right)$,  
        we can easily see that (from Figure~\ref{fig:trustworthiness_2}) the atoms:    
        $pos\left(cam,advanced\_mode,integrity,0\right)$,  
        $pos\left(cam,secure\_boot,cyber\_security,0\right)$, etc. belong to every answer set of  $\Pi\left(\Delta_{lkas}\right) =\Pi\left(\mathcal{S}_{lkas}\right)^n \cup \Pi\left(I^{1}_{lkas}\right) \cup \Pi_{mlt}^{lkas} $. Similar atoms are present to record the number of concerns affected by different properties.
        Furthermore, 
        $twcp\left(cam,28,0\right)$, $twcn\left(cam,6, 0\right)$, 
        $twcp\left(sam,40,0\right)$, $twcn\left(sam,0, 0\right)$, $twcp\left(bat,6,0\right)$ and $twcn\left(bat,5,0\right)$ belong to any answer set of  $\Pi\left(\mathcal{S}_{lkas}\right)^n \cup \Pi\left(I^{1}_{lkas}\right) \cup \Pi_{mlt}^{lkas}$: $SAM$ is the most trustworthy component; $BAT$ is the least trustworthy components at step 0.

    \item
        Now, let us consider $I^{2}_{lkas}$ of LKAS system ( Figure~\ref{fig:trustworthiness_2}) where there are two properties that are observed to be \emph{False}: {\tt \small Firewall-Setup} and {\tt \small Trusted-Auth-Device}. 
        For $\Delta_{lkas} = \left(\mathcal{S}_{lkas},I^{2}_{lkas}\right)$, the computation of the program $\Pi\left(\mathcal{S}_{lkas}\right)^n \cup \Pi\left(I^{2}_{lkas}\right) \cup \Pi_{mlt}^{lkas}$ shows us: 
        $twcp\left(cam,22,0\right)$, $twcn\left(cam,6, 0\right)$, 
        $twcp\left(sam,22,0\right)$, $twcn\left(sam,12, 0\right)$, $twcp\left(bat,0,0\right)$ and $twcn\left(bat,11,0\right)$ belong to any answer set of  $\Pi\left(\mathcal{S}_{lkas}\right)^n \cup \Pi\left(I^{2}_{lkas}\right) \cup \Pi_{mlt}^{lkas}$. In this situation, $CAM$ is the most trustworthy component; $BAT$ is the least trustworthy components at step 0.
\end{itemize}        
        
\end{example}
We conclude this part with a brief discussion on possible definitions of $\succeq$. The proposed definition assumes everything being equal (e.g. all concerns and properties are equally important, the roles of every components in a CPS system are equal, etc.). 
In practice, the ordering $\succeq$ might be qualitative and user-dependent, e.g., an user might prefer confidentiality over integrity. $\succeq$ can be defined over a qualitative ordering and implemented in ASP in a similar fashion that preferences have been implemented (e.g., \cite{GelfondS98}).

%

\subsection{Computing Mitigation Strategies}
\label{sub-section:compute_mitigation_strategy}

The program $\Pi\left(\mathcal{S}\right)^{n} \cup \Pi_{sat}$ can be for computing a mitigation strategy  
by adding the rules shown in Listing~\ref{lst:generating_plan}:  
\begin{lstlisting}[language=clingo,caption={$\Pi^{n}_{plan}$: Generating Plan},  label=lst:generating_plan, mathescape=true,xleftmargin=.01\textwidth, breaklines=true]
1{occurs(A,T):action(A)}1 :- step(T), T<n.
:- occurs(A,T), not exec(A,T).
:- not h(sat(c), n). 
\end{lstlisting}
The first rule containing the atom $\mathtt{1 \{occurs\left(A,T\right):action\left(A\right)\}1}$ --- a choice atom --- is intuitively used to generate the action occurrences and says that at any step $T$, exactly one action must occur. The second rule states that an action can only occur if it is executable. The last rule helps enforce that $h\left(sat\left(c\right),n\right)$ must be true in the last state, at step $n$. 
For a set of concerns $\Sigma$, let $\Pi^{n}_{plan}\left[\Sigma\right]$ be the program obtained from $\Pi^{n}_{plan}$ by replacing its last rule with the set $\mathtt{\{{:}{-} not \quad h\left(sat\left(c\right),n\right). \mid c \in}$ $\Sigma\}$. 
Based on the results in answer set planning, we can show: 
\begin{proposition}
    \label{prop:mitigation}
        Let $\Delta = \left(\mathcal{S},I\right)$ be a CPS theory and $\Sigma$ be a set of concerns in $\Omega$. Then, $\left[a_0,\ldots,a_{n-1}\right]$ is a mitigation strategy for $\Sigma$ iff 
        $\Pi\left(\Delta\right) \cup \Pi^{n}_{plan}\left[\Sigma\right]$ has an answer set $S$ such that 
        $occurs\left(a_i,i\right) \in S$ for every $i=0,\ldots,n-1$.
\end{proposition}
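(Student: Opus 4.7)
The plan is to reduce this to the already-established correctness results for the encoding of the transition function and for concern satisfaction, and then apply a standard answer set planning argument. I would proceed by proving the two directions separately, in each case building an answer set (or extracting a plan) step by step along the horizon $n$.

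For the forward direction, assume $\alpha = [a_0,\ldots,a_{n-1}]$ is a mitigation strategy for $\Sigma$. By Definition~\ref{def-simple}, the execution of $\alpha$ in $I$ succeeds and reaches a state $s_n$ in which every $c \in \Sigma$ is satisfied. Let $s_0 = I$ and let $s_{i+1}$ be the unique state in $\Phi_\mathcal{S}(a_i,s_i)$ (unique because $(A,F)$ is assumed deterministic). First I would use the proposition that $\Pi(\mathcal{S})^n$ correctly captures $\hat{\Phi}_\mathcal{S}$ together with the observation following it (the one characterizing answer sets of $\Pi(\mathcal{S})^n \cup \{occurs(a_i,i) \mid i=0,\ldots,n-1\}$) to produce a candidate answer set $S$ containing $\{occurs(a_i,i) \mid i=0,\ldots,n-1\}$ and the fluent literals describing $s_0,\ldots,s_n$. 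The choice rule on Line~1 of Listing~\ref{lst:generating_plan} is satisfied because exactly one $occurs$ atom is selected per step; the executability constraint is satisfied because each $a_i$ is executable in $s_i$; and Proposition~\ref{def-concern-satisfiaction}, applied at step $n$, gives $h(sat(c),n) \in S$ for each $c \in \Sigma$, so the goal constraints $\mathtt{{:}{-}\ not\ h(sat(c),n)}$ are satisfied. A minimality check then yields that $S$ is an answer set of $\Pi(\Delta) \cup \Pi^{n}_{plan}[\Sigma]$.

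For the backward direction, suppose $S$ is an answer set of $\Pi(\Delta) \cup \Pi^{n}_{plan}[\Sigma]$ containing $occurs(a_i,i)$ for $i=0,\ldots,n-1$. The choice rule forces that these are the only $occurs$ atoms in $S$, and the executability constraint forces $exec(a_i,i) \in S$ for each $i$. Again invoking the correctness result for $\Pi(\mathcal{S})^n$, the fluent literals in $S$ at step $i+1$ describe precisely the (unique) state $s_{i+1} \in \Phi_\mathcal{S}(a_i,s_i)$, so $\hat{\Phi}_\mathcal{S}(\alpha,I) = \{s_n\} \neq \emptyset$. The goal constraints, together with Proposition~\ref{def-concern-satisfiaction} reread in the ``only if'' direction, give $s_n \models c$ for every $c \in \Sigma$, and thus $\alpha$ meets Definition~\ref{def-simple}.

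The main obstacle I expect is keeping the appeal to the earlier propositions clean. Proposition~\ref{def-concern-satisfiaction} is stated at step $0$ over $\Pi(\mathcal{S})^0$, whereas here it must be applied at an arbitrary step $t$ over $\Pi(\mathcal{S})^n$; I would note that $\Pi_{sat}$ is time-parametric (every rule is guarded by $step(T)$) and depends on the step-$t$ fluent literals in exactly the same way it depends on step-$0$ literals in the original statement, so the proof of that proposition lifts verbatim to any step $t \le n$. The remaining bookkeeping---that the choice rule together with $n$ copies of the executability constraint truly forces a one-action-per-step legal trajectory, and that minimality of the answer set does not introduce spurious atoms---is a standard answer set planning argument and can be cited rather than re-derived.
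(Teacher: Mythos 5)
Your proposal is correct and follows essentially the same route as the paper, which simply appeals to the correctness properties of $\Pi\left(\Delta\right)$ (the transition-function encoding and Proposition~\ref{def-concern-satisfiaction}) together with the constraints in $\Pi^{n}_{plan}\left[\Sigma\right]$ and standard answer set planning results. Your version is merely more explicit, in particular in separating the two directions and in noting that the step-$0$ statement of Proposition~\ref{def-concern-satisfiaction} lifts to arbitrary steps because $\Pi_{sat}$ is time-parametric.
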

The proof of this proposition relies on the properties of $\Pi\left(\Delta\right)$ discussed in previous section and the set of constraints in $\Pi^{n}_{plan}\left[\Sigma\right]$.

\subsection{Non-compliance Detection in CPS Systems}

The program $\Pi\left(\mathcal{S}\right)^{n} \cup \Pi_{sat}$ can be used in non-compliance detection by adding the rules shown in Listing~\ref{lst:conflict_detection}:  
\begin{lstlisting}[language=clingo,caption={$\Pi^{n}\left(SA,SC\right)$: Non-compliance  Detection},  label=lst:conflict_detection, mathescape=true,xleftmargin=.01\textwidth, breaklines=true]
1{occurs(A,T):sa_action(A)}1 :- step(T), T<n, not conflict(T).
:- occurs(A,T), not exec(A,T), step(T).
1{h(F,0); $\neg$h(F,0)}1 :- fluent(F).    
conflict(T)   :- sc_concern(C), $\neg$h(sat(C),T), step(T).
conflict(T+1) :- conflict(T), step(T). 
:- not conflict(n). 
\end{lstlisting}
The first two rules are similar to the rules for the planning program, with the exception that the action selection focuses on the actions in the set $SA$.
The third rule generates an arbitrary initial state. 
The rules 4-5 state that if some concern in $SC$ is not satisfied at time $T$ then a conflict arises and the constraint on the last rule says that we would like to create a conflict at step $n$. 

We assume that actions in $SA$ are specified by atoms of the form $sa\_action\left(a\right)$ and concerns in $SC$ are specified by atoms of the form $sc\_concern\left(c\right)$. It is easy to see that an answer set $S$ of $\Pi\left(\mathcal{S}\right)^{n} \cup \Pi_{sat} \cup \Pi^{n}\left(SA,SC\right)$ represents a situation in which the system will eventually not satisfy some concern in $SC$. Specifically, if the sequence of actions $\left[a_0,\ldots,a_t\right]$ such that $occurs\left(a_i, i\right) \in S$ and, for $s>t$, there exists no $occurs\left(a_s,s\right)\in S$, is executed in the 
initial state (the set $\{f \mid h\left(f,0\right) \in S, f \in F\} \cup \{\neg f \mid \neg h\left(f,0\right) \in S, f \in F\}$) then some concern in $SC$ will not be satisfied after $n$ steps. In other words, to check whether $\mathcal{S}$ is  weakly $n$-noncompliant, we only need to check whether 
$\pi_n = \Pi\left(\mathcal{S}\right)^{n} \cup \Pi_{sat} \cup \Pi^{n}\left(SA,SC\right)$ as an answer set of not. The proof of this property relies on the definition of an answer set for a program with constraints, which say that the constraint {\tt \small :- not conflict(n).} must be false in the answer set, which in turn implies that {\tt \small conflict(n)} must be true. 

If $\mathcal{S}$ is weakly $n$-noncompliant, we can do one more check to see whether it is strongly $n$-complaint as follows. Let  $\pi_n'$ be a program obtained from $\pi_n$ by replacing ``{\tt \small :- not conflict(n)}'' 
with ``{\tt \small :- conflict(n).}'' We can show that if $\pi_n'$ has no answer set then for every initial state of $\mathcal{S}$ no action sequence is executable or there exists some action sequence such that {\tt \small conflict(n).} is true. Combining with the fact that $\mathcal{S}$ is weakly $n$-noncompliant, this implies that the domain is strongly $n$-noncompliant. Again, the proof of this property relies on the definition of answer sets of programs with constraints, which say that the constraint {\tt \small :- conflict(n).} must be false in an answer set, which in turn implies that {\tt \small conflict(n)} must be false. However, the program having no answer set implies that every executable sequence of actions will generate {\tt \small conflict(n)}.

\subsection{Likelihood of Concerns Satisfaction and Preferred Mitigation Strategies}
In this subsection, we present an ASP program for computing LoS of concerns and preferred mitigation strategies using LoS. 
Listing~\ref{lst:compute_likelihood_sat} shows the ASP encoding for computing of LoS of concerns. It defines 
the predicate {\tt \small llh\_sat(C,N,T)} which states that the likelihood of satisfaction of concern $C$ at time step $T$ is $N$. It starts with the definition of different predicates {\tt \small nAllPosCon/3} and {\tt \small nActPosCon/3} representing  $rel^\textnormal{+}\left(c\right)$ and $rel_{sat}^\textnormal{+}\left(c,s\right)$ at the step $T$, i.e., the number of all possible positively impacting properties on concern $C$ and the number of positively impacting properties on concern $C$ holding in step $T$, respectively. Recall that 
{\tt \small positiveImpact(P,C)} is defined as in Subsection~\ref{subsect:computing-most-least}. 
Line 5 creates an ordering between subconcerns of concern $C$ for the computation of {\tt \small llh\_sat(C,N,T)}. The LoS for a concern without a subconcern is computed in Line 8. Rules on the lines 9-12 compute the LoS of concerns in accordance with the order created by rule on Line 1. {\tt \small llh\_sat(C,N,T)} is then computed using Equation~\ref{def-llh}.
%


\begin{lstlisting}[language=clingo,caption=$\Pi_{LoS}$: Computing Likelihood of Concerns Satisfaction, label=lst:compute_likelihood_sat, mathescape=true,xleftmargin=.01\textwidth, breaklines=true]
nAllPosCon(C,N2,T):- concern(C), step(T), N2=#count{P,Com : comp(Com), prop(P), positiveImpact(P,C), addBy(C,P), relation(Com,P)}.
nActPosCon(C,N1,T):- concern(C), step(T), N1=#count{P,Com : comp(Com), prop(P), positiveImpact(P,C), addBy(C,P), relation(Com,P),       h(active(Com,P),T)}.
deg_pos(C,1,T)         :- step(T), concern(C), nAllPosCon(C,0,T).
deg_pos(C,N1*100/N2,T) :- nAllPosCon(C,N2,T), nActPosCon(C,N1,T), concern(C), N2!=0.
order(SC,C,N) :- subCo(C,SC), N={SC < SCp : subCo(C,SCp)}.
hSubCo(C) :- subCo(C,SC), concern(C), concern(SC).
$\neg$hSubCo(C):- concern(C), not hSubCo(C).
llh_sat_sub(C,1,T) :- step(T), concern(C), $\neg$hSubCo(C). 
llh_sat(C,N1*N2,T) :- step(T), concern(C), llh_sat_sub(C,N1,T), deg_pos(C,N2,T).
llh_sat_sub_aux(C,0,X,T) :- step(T), subCo(C,SC), order(SC,C,0), llh_sat(SC,X,T).
llh_sat_sub_aux(C,N,X*Y,T) :- step(T), subCo(C,SC), order(SC,C,N), llh_sat(SC,Y,T), llh_sat_sub_aux(C,N-1,X,T). 
llh_sat_sub(C,X,T) :- llh_sat_sub_aux(C,N,X,T), step(T), concern(C), not llh_sat_sub_aux(C,N+1,_,T).
\end{lstlisting}
%

 %
%
 
%
It is easy to check that the above program correctly computes the values of $deg^\textnormal{+}\left(c,s\right)$ and $\varphi_{LoS}\left(c,s\right)$. Indeed, the program $\Pi\left(\Delta_{lkas}\right) = \Pi\left(\mathcal{S}_{lkas}\right)^n \cup \Pi\left(I_{lkas}\right) \cup \Pi^{c}_{lkas} \cup \Pi_{sat} \cup \Pi^{n}_{plan} \cup \Pi_{LoS}$ correctly computes the LoS of concerns for various concerns as shown in Subsection~\ref{sub-section:compute_mitigation_strategy_theory} (Figure~\ref{fig:trust_tree_LoSC}).

Having computed LoS of concerns and $\varphi_{LoS}$, identifying the best strategies in according to the two approaches in Subsection~\ref{sub-section:compute_mitigation_strategy_theory} is simple. We only need to add rules that aggregates the LoS of the top-level concerns specified in the CPS with their corresponding weights or preferences. This is done as follows: 

\begin{itemize} 

    \item \emph{Weighted LoS}: Listing~\ref{lst:compute_weighted_los} computes the weighted LoS of the final state. The rule is self-explanatory. 
\begin{lstlisting}[language=clingo,caption=Computing Weighted LoS, label=lst:compute_weighted_los, mathescape=true,xleftmargin=.01\textwidth, breaklines=true]
scoreLoS(Sc,T) :- llh_sat(functionality,V$_{fun}$,T), wLoS(functionality,W$_{fun}$), llh_sat(business,V$_{bus}$,T), wLoS(business,W$_{bus}$), llh_sat(human,V$_{hum}$,T), wLoS(human,W$_{hum}$), llh_sat(trustworthiness,V$_{tru}$,T), wLoS(trustworthiness,W$_{tru}$), llh_sat(timing,V$_{tim}$,T), wLoS(timing,W$_{tim}$), llh_sat(data,V$_{dat}$,T), wLoS(data,W$_{dat}$), llh_sat(boundaries,V$_{bou}$,T), wLoS(boundaries,W$_{bou}$), llh_sat(composition,V$_{com}$,T), wLoS(composition,W$_{com}$), llh_sat(lifestyle,V$_{lif}$,T), wLoS(lifestyle,W$_{lif}$), Sc = V$_{fun}$*W$_{fun}$ + V$_{bus}$*W$_{bus}$ + V$_{hum}$*W$_{hum}$ + V$_{tru}$*W$_{tru}$ + V$_{tim}$*W$_{tim}$ + V$_{dat}$*W$_{dat}$ + V$_{bou}$*W$_{bou}$ + V$_{com}$*W$_{com}$ + V$_{lif}$*,W$_{lif}$.
\end{lstlisting}

        \item \emph{Specified Preferences LoS}: ASP solver provides a convenient way for computing preferences based on lexicographic order among elements of a set. Assume that {\tt \small Trustworthiness} is preferred to {\tt \small Business} then the two statements 
        
        \hspace*{1cm} {\tt \small \#\textbf{maximize}\{$V_1$@k : llh\_sat(trustworthiness, $V_1$, n)\}} \\  
        \hspace*{1cm} {\tt \small \#\textbf{maximize}\{$V_2$@k': llh\_sat(business, $V_2$, n)\}}   
    
        with $k > k'$ and $n$ is the length of the plan will return answer sets in the lexicographic order, preferring  the concern 
        {\tt \small Trustworthiness} over {\tt \small Business}. With these statements, any specified preferred LoS over the set of top-level concern can be implemented easily.   
    
\end{itemize}

\subsection{Computing Mitigation Strategy with The Best Chance to Succeed}
\label{sub-section:for_probability_success_imp}

To compute strategies with the maximal probability of success, we only need to extend the program $\Pi^n_{plan}$  with the following rules: %
\begin{itemize}  
        \item for each statement ``$a \quad \prob{} \quad v \quad \mathbf{if} \quad p_1,\ldots,p_n$'', the two rules: \\
        \hspace*{1cm} $\mathtt{ pr\left(a,v,T\right)\:\: {:}{-} \:\: h^*\left(p_1,T\right),\ldots,h^*\left(p_n,T\right).}$ \\
        \hspace*{1cm} $\mathtt{ dpr\left(a,T\right)\:\: {:}{-} \:\: h^*\left(p_1,T\right),\ldots,h^*\left(p_n,T\right).}$ \\
        which check for the satisfaction of the condition in a statement defining the probability of success in the step $T$ and states that it is defined. 
        \item the rule: \\  
        \hspace*{1cm} $\mathtt{ pr\left(A,1,T\right)\:\: {:}{-} \:\: exec\left(A,T\right), \naf \ dpr\left(A,T\right).}$ \\
        which says that by default, the probability of success of $a$ at step $T$ is 1. 
        
        \item computing the probability of the state at step $T$: \\
        \hspace*{1cm} $\mathtt{ prob\left(1,0\right).}$ \\
        \hspace*{1cm} $\mathtt{ prob\left(U*V,T\textnormal{+}1\right)\:\: {:}{-} \:\: prob\left(U,T\right), occurs\left(A,T\right), pr\left(A,V,T\right).}$
        \\
        where the first rule says that the probability of the state at the time 0 is 1; $prob\left(v,t\right)$ states that the probability of reaching the state at the step $t$ is $v$ and is computed using the second rule. 
\end{itemize}
Let $ \Pi^{n}_{bestPrS}$ be $\Pi^n_{plan}$ and the above rules.  
We have that if $\left[a_0,\ldots,a_{n-1}\right]$ and $S$ is an answer set of $\Pi\left(\Delta\right) \cup \Pi^{n}_{bestPrS} \cup \{occurs\left(a_i,i\right) \mid i=0,\ldots,n-1\}$ then $prob\left(\Pi_{i=0}^{n-1} pr\left(a_i,s_i\right),n\right) \in S$.
To compute the best strategy, we add the rule \\
\hspace*{1cm} $\mathtt{\small \#maximize \{V : prob\left(V,n\right)\}.}$ \\
to the program $\Pi^{n}_{bestPrS}$. 
\begin{example}
        Continue with Example~\ref{example:bg_lkas_and_extend} after a cyber-attack occurs and causes the property {\tt \small basic-mode} to be \emph{False}. As in Section~\ref{sub-section:compute_mitigation_strategy}, the five mitigation strategies ($\alpha_1,\alpha_2,\alpha_3,\alpha_4$ and $\alpha_5$) are generated to restore the LKAS system.
        Assume that the probability of success of $\mathtt{\small tOn\left(basic\_mode\right)}$,
        $\mathtt{\small switM\left(cam,advanced\_mode\right)}$, and 
        $\mathtt{\small switM\left(sam,advanced\_mode\right)}$ are 
        0.2, 0.6, 0.7 in every state, respectively. 
        In this case, the strategies $\alpha_2$ and $\alpha_3$ have the maximal probability to succeed. 
\end{example}
%
\section{Towards a Decision-Support System for CPSF}
As a demonstration of the potential use of our approach, in this section we give a brief overview of a decision-support system that is being built for use by CPS designers, managers and operators. We also include preliminary considerations on performance aspects.
\begin{figure}[h]
	\includegraphics[width=1.0\textwidth]{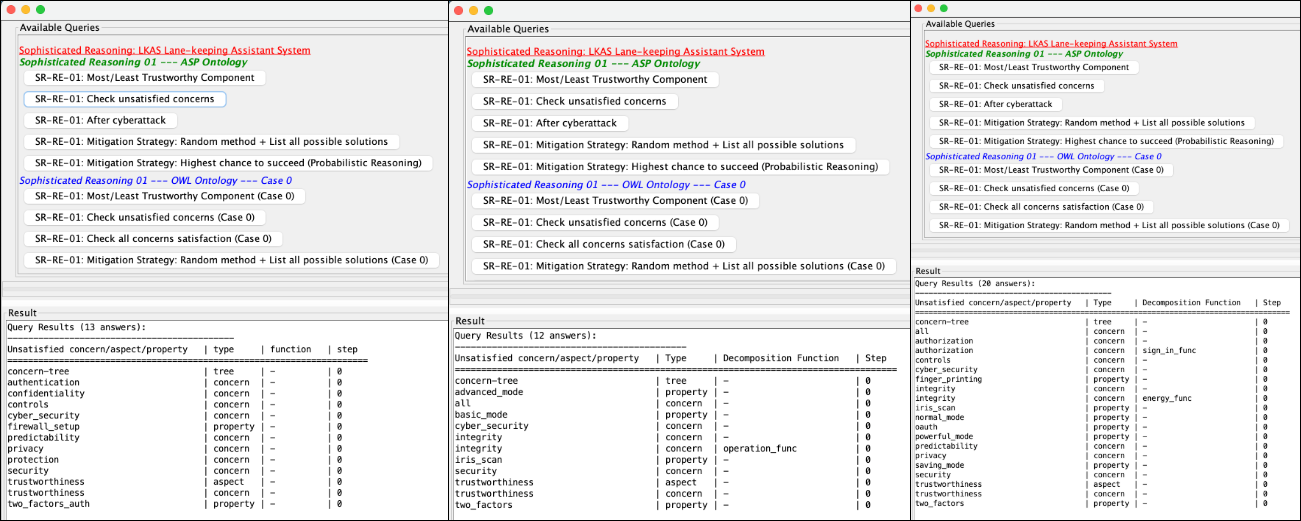}
	\caption{Computing Satisfaction of Concerns in Reasoning Component\label{fig:cps_reasoner_satisfaction}}
\end{figure}
The decision-support system relies on an ASP-based implementation for reasoning tasks in CPS theories (described in Section~\ref{section:implementation}) with the different modules for answering queries described in Section~\ref{sub:section:reasoning_tasks}, and comprises a \textit{reasoning component} and a \textit{visualization component}.
Figure~\ref{fig:cps_reasoner_satisfaction} shows the \textit{reasoning component} at work on computing satisfaction of concerns related to the LKAS domain example (described in Section~\ref{sub-section:concern_satisfaction}). Figure~\ref{fig:cps_reasoner_mlt_ms_llh} illustrates the reasoning component at work on other modules (Section~\ref{subsect:computing-most-least}--~\ref{sub-section:for_probability_success_imp}) with different situations related to the LKAS domain. Notice how the user can ask the system to reason about satisfaction of concerns, to produce mitigation plans as well as to select the most preferred mitigation strategy, etc.
\begin{figure}[h]
	\includegraphics[width=1.0\textwidth]{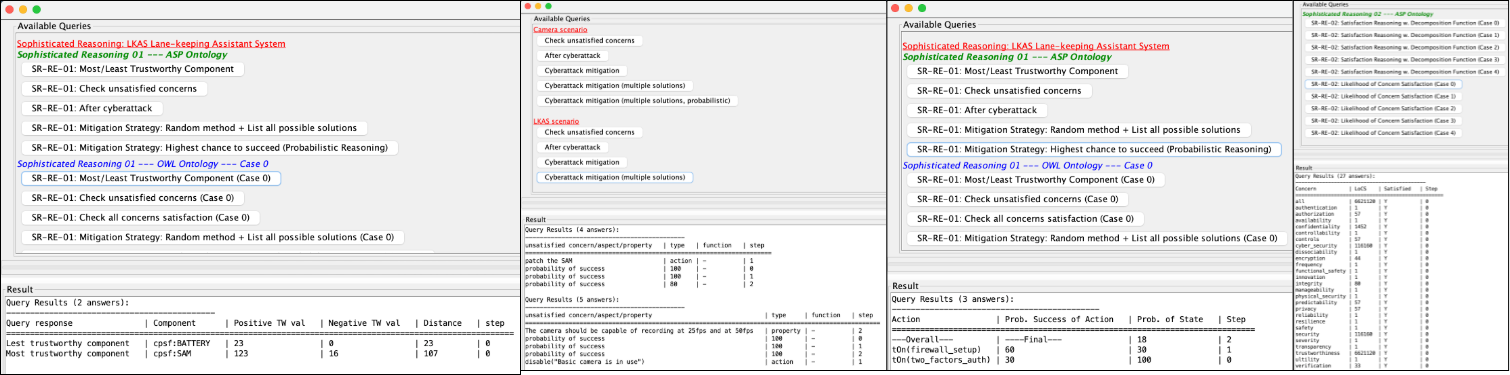}
	\caption{Other reasoning modules in Reasoning Component\label{fig:cps_reasoner_mlt_ms_llh}}
\end{figure}

The output of the reasoning component can then be fed to the \textit{visualization component}, where advanced visualization techniques allow practitioners to get a birds-eye view of the CPS or dive into specific details. For instance, the sunburst visual from Figure~\ref{fig:visualizer_example} provides a view of the CPS from Figure~\ref{fig:trustworthiness_2} where the aspects are presented in the inner most ring. Moving outwards, the visualization shows concerns from increasingly deeper parts of the concern tree and properties. The left-hand side of the figure depicts the visualization in the case in which all concerns are satisfied (blue), while the right-hand side shows how the sunburst changes when certain concerns (highlighted as red) are not satisfied. Focusing on the right-hand side, the text box open over the visual reports that the trustworthiness aspect is currently not not satisfied and the level at which this concern is not being met is the concern of privacy and the property of manageability. The visual allows for a pinpoint where within the CPS framework issues have arisen that when addressed can enable a working state. We omit the details of \emph{visualization component} description as it is not the focus of this paper.
\begin{figure}[h]
	\includegraphics[width=0.98\textwidth]{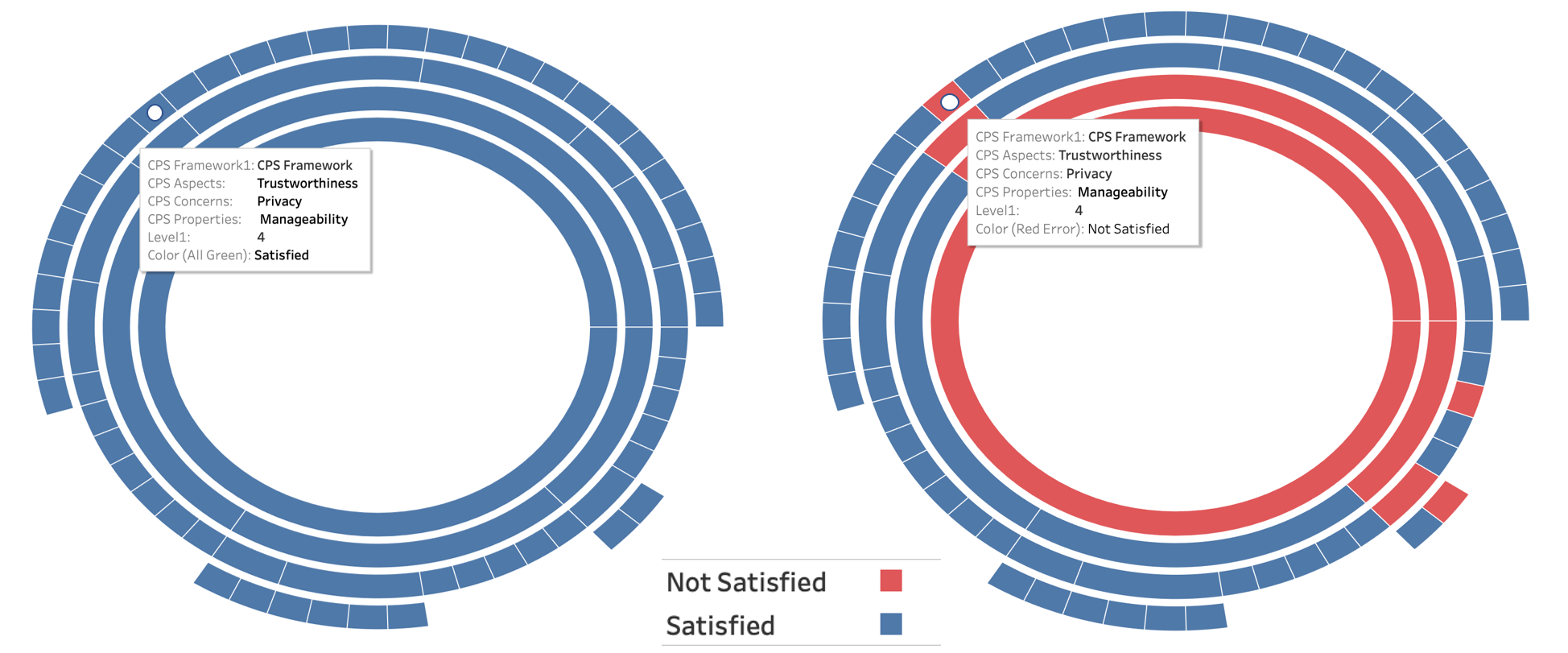}
	\caption{Visualization component\label{fig:visualizer_example}}
\end{figure}

To ensure flexibility and to allow for investigation on the scalability on larger CPS, the decision-support system is designed to support a variety of hybrid ontology-ASP reasoning engines. Currently, we consider four reasoning engines: the \emph{na\"ive engine} is implemented by connecting, in a loosely-coupled manner\footnote{By loosely-coupled, we mean that the components see each other as black-boxes and only exchange information, via simple interfaces, at the end of their respective computations. Compare this with a tightly-coupled architecture, where the components have a richer interfaces for exchange state information and controlling each other's execution flow while their computations are still running.}, the SPARQL reasoner{\footnote{https://www.w3.org/TR/rdf-sparql-query/}}
and the Clingo ASP solver. This engine issues a single SPARQL query to the ontology reasoner at the beginning of the computation, fetching all necessary data. The \emph{Clingo-Python engine} is another loosely-coupled engine, leveraging Clingo's ability to run Python code at the beginning of the computation. This engine issues multiple queries in correspondence to the occurrences of special ``external predicates'' in the ASP\ program, which in principle allows for a more focused selection of the content of the ontology. The \emph{DLVHex2 engine} also uses a similar fragmentation of queries, but the underlying solver allows for the queries to be executed at run-time, which potentially results in more focused queries, executed only when strictly needed. Finally, the \emph{Hexlite engine} leverages a similar approach, but was specifically designed as a smaller, more performant alternative to \emph{DLVHex2}.

In this preliminary phase of our investigation on scalability, all reasoning engines have exhibited similar performance, as exemplified by Table \ref{table:compare_reasoner_new}. 
The table summarizes the results of question-answering experiments on the Lane Keeping/Assist System (LKAS) domain and on the Smart Elevator domain~\cite{Nguyen2020ReasoningAT}. The reasoning tasks considered are for answering queries discussed earlier, including:
\begin{itemize}
    \item ({\tt$\mathbf{Q}_1$}) Computing satisfaction of concerns.
    \item ({\tt$\mathbf{Q}_2$}) Computing most/least trustworthy components.
    \item ({\tt$\mathbf{Q}_3$}) Generating mitigation strategies.
    \item ({\tt$\mathbf{Q}_4$}) Non-compliance detection in a CPS.
    \item ({\tt$\mathbf{Q}_5$}) Selecting the best mitigation strategy by preferred mitigation strategies.
    \item ({\tt$\mathbf{Q}_6$}) Computing the likelihood of concerns satisfaction.
\end{itemize}
In Table~\ref{table:compare_reasoner_new}, the performance of the execution for each query ({\tt$\mathbf{Q}_1$}-{\tt$\mathbf{Q}_6$}){\footnote{We use a Macbook Pro 16 running macOS Big Sur Version 11.5.2, 32GB RAM DDR4, 2.6Ghz 6-Core Intel Core i9, and ASP solver {\tt \small Clingo}}} is measured by the average processing time of reasoning computations in our experiment CPS theories (LKAS and Smart Elevator) with different initial situations (different initial configurations).
While the results show that the na\"ive engine is marginally better than the others, the differences are quite negligible, all within $10\%$.
\begin{table}[h]
	\begin{center}
		\begin{tabular}{ c r r r r r r r r }
			\topline
			\multirow{2}{*}{\textbf{Reasoning Tasks}} & \multicolumn{4}{c}{\textbf{LKAS Domain}} & %
			\multicolumn{4}{c}{\textbf{Smart Elevator Domain}} \\
			\cmidrule(lr){2-5}
            \cmidrule(lr){6-9}
			& \specialcell[c]{Na\"ive} & \specialcell[c]{Clingo\\-Python} & {DLVHex2} & {Hexlite} & \specialcell[c]{Na\"ive} & \specialcell[c]{Clingo\\-Python} & {DLVHex2} & {Hexlite} \\
			\hline
			{\tt $\mathbf{Q}_1$} & 1.35s & 1.48s & 1.32s & 1.37s & 1.31s & 1.45s & 1.30s & 1.35s \\
			\hline
			{\tt $\mathbf{Q}_2$} & 1.28s & 1.43s & 1.29s & 1.32s & 1.25s & 1.32s & 1.22s & 1.30s \\
			\hline
			{\tt $\mathbf{Q}_3$} & 1.36s & 1.52s & 1.38s & 1.41s & 1.33s & 1.49s & 1.37s & 1.39s \\
			\hline
			{\tt $\mathbf{Q}_4$} & 1.41s & 1.52s & 1.41s & 1.45s & 1.40s & 1.53s & 1.41s & 1.47s \\
			\hline
			{\tt $\mathbf{Q}_5$} & 1.38s & 1.47s & 1.42s & 1.39s & 1.26s & 1.39s & 1.33s & 1.35s \\
			\hline
			{\tt $\mathbf{Q}_6$} & 1.74s & 1.93s & 1.79s & 1.81s & 1.78s & 1.95s & 1.77s & 1.86s \\
			\botline
		\end{tabular}
	\end{center} 
	\vspace*{.001in}
	\caption{CPS domains Querying, Extracting and Reasoning Summary}
	\label{table:compare_reasoner_new}
	\vspace*{-.1in}
\end{table}
It is conceivable that larger-scale experiments will eventually exhibit similar patterns to those found in other research on the scalability of hybrid systems (e.g., \cite{bl16}). A thorough analysis will be the subject of a separate paper where we have done some preliminary experiment with our CPS reasoning system and found that it can work ontologies with more than 150K triples, 85 classes, 61K individuals, 30 object properties, 40 data properties, and 45 subclass relations within a minute.

\section{Related Work}
Due to the difference in level of abstraction, most of the approaches from the literature can be viewed as orthogonal and complementary to ours. Thus, we focus our review of related work on what we consider to be the most relevant approaches. 

The literature from the area of cybersecurity is often focused on the notion of graph-based attack models. Of particular relevance is the work on Attack-Countermeasure Trees (ACT) \cite{DBLP:journals/scn/RoyKT12}. An ACT specifies how an attacker can achieve a specific goal on a IT system, even when mitigation or detection measures are in place. While ACT are focused on the Cybersecurity concern, our approach is rather generally applicable to the broader Trustworthiness aspect of CPS and can in principle be extended to arbitrary aspects of CPS and their dependencies. The underlying formalization methodology also allows for capturing sophisticated temporal models and ramified effects of actions. In principle, our approach can be extended to allow for quantitative reasoning, e.g., by leveraging recent work on Constraint ASP and probabilistic ASP \cite{bl16,os12,BaralGR05}. As we showed above, one may then generate answers to queries that are \emph{optimal} with respect to some metrics. It is worth pointing out that the combination of physical (non-linear) interaction and logical (discrete or Boolean) interaction of CPS can be modeled as a mixed-integer, non-linear optimization problem (MINLP) extended with logical inference. MINLP approaches can support a limited form of logic, e.g., through disjunctive programming \cite{balas:1975}. But these methods seem to struggle with supporting richer logics and inferences such as ``what-if'' explorations. For relevant work in this direction, we refer the reader to \cite{ruthOnline17,DBLP:journals/corr/DIddioH17}.

One major focus in the area of cybersecurity is the identification and mitigation of compromised devices. Behavior analysis and behavioral detection are some of the approaches used in this area. \cite{bau19} proposes a system-level framework for the identification of compromised smart grid devices. The approach employs a combination of system call and function call tracing, which are paired with signal processing and statistical analysis. In a similar vein, \cite{scw18} covers model-based techniques for addressing the problem of sensors that can be manipulated by an attacker.  It is worth noting that,  in our methodology, the presence or lack of compromised devices or components -- and even the type of compromise -- can be captured by means of properties, which in turn affect specific concerns. Techniques such as those described in the cited papers can then be used to determine whether such properties are satisfied or not.

Another related, complementary approach is presented in \cite{arm17}, where the authors tackle the problem of validation and verification of requirements. The paper proposes model-based testing as a solution to two key problems in validation and verification of requirements: translating requirements into concrete test inputs and determining what the outcome of such tests says about the satisfaction of the requirements. From this point of view, the approach from \cite{arm17} can be used to provide the information about satisfaction of requirements that is necessary for the reasoning tasks covered by in our investigation.

\cite{lee16} analyzes the role of models in the engineering of CPS and argues for classes of models that trade accuracy and detail in favor of simplicity and clarity of semantics. This idea is in line with the considerations that prompted the development of {\cpsf}, and which are infused in our work through its legacy. In a related fashion, \cite{row19} proposes a survey of \emph{conformance relations}, where the term describes the link between functional behavior of a model and the behavior of the implemented system (or of a more concretized model). Conformance relations are typically applied to the task of analyzing requirements and their link to the CPS being modeled, and in that sense \cite{row19} is orthogonal to our work. On the other hand, the paper elicits the interesting issue of whether the characterization of CPS from {\cpsf} might be viewed, itself, as a conformance relation. This is an open question, which we plan to address in the future.

\cite{thr19} presents a rich survey of frameworks for implementing reasoning mechanisms in smart CPS. It is to be noted that the focus of the paper is on the reasoning mechanisms that occur within a CPS in order to achieve ``smartness'' \cite{thr19}, while our focus is on reasoning mechanisms that allow designers, maintainers and operators to reason about a CPS -- where the CPS itself may or may not be ``smart.'' There is certainly a certain degree of overlap between this paper and our work, but also of important differences. In particular, the reasoning mechanisms we discussed here are not always applicable at the system level, which is the focus of \cite{thr19}. For instance, our techniques could be used in real-time by a CPS to determine whether its functional aspect is satisfied, but it may be unrealistic for a CPS to reason about its own trustworthiness. From another point of view, reasoning mechanisms discussed in \cite{thr19}, such as planning and decision-making, can be viewed as tools for the satisfaction of properties. In this sense, a designer might want to use the results of that paper to ensure that the decision-making mechanisms implemented \emph{within} a CPS satisfy certain properties that are responsible for ensuring the functional aspect of the CPS or even its trustworthiness.

The methodologies proposed in our paper build on a vast number of research results in ASP and related areas such as answer set planning, reasoning about actions, etc. and could be easily extended to deal with other aspects discussed in \cpsf{}. They are well-positioned for real-world applications given the efficiency and scalability of ASP-solvers (e.g., {\small \tt clingo}~\cite{GebserKNS07}) that can deal with millions of atoms, incomplete information, default reasoning, and features that allow ASP to interact with constraint solvers and external systems. 

\section{Conclusions and Future Work}
The paper presents a precise definition of a CPS, which, in conjunction with the CPS Ontology Framework by NIST, allows for the representing and reasoning of various problems that are of interest in the study of CPS. Specifically, the paper defines several problems related to the satisfaction of concerns of a CPS theories such as the problem of identifying non-compliant CPS systems, the problem of identifying the most/least trustworthy or vulnerable components, computing mitigation strategies, a most preferred mitigation strategies, or strategies with the best chance to succeed. For each problem, the paper presents a formal definition of ``\emph{what is the problem?}'' and provides an ASP program that can automatically verify such properties. To the best of our knowledge, all of these contributions are new to the research in Cyber-Physical Systems.  

The current ASP implementation\footnote{Available at \url{https://github.com/thanhnh-infinity/Research_CPS}} provides a first step towards developing a tool for CPS practitioners and designers. It automatically translates a system specification as an ontological description (e.g., as seen in Figure~\ref{fig:trust_tree_LoSC}) to ASP code and allows users to ask questions related to the aforementioned issues. It has been validated against small systems. One of our goals in the immediate near future is to develop an user-friendly interface that allows users to design or model their real-world CPS and identify potential issues within their systems and possible ways to address these issues before these issues become harmful.

\smallskip
\noindent 
{\footnotesize
{\bf Disclaimer.} 
Official contribution of the National Institute of Standards and Technology; not subject to copyright in the United States. Certain commercial products are identified in order to adequately specify the procedure; this does not imply endorsement or recommendation by NIST, nor does it imply that such products are necessarily the best available for the purpose.  Portions of this publication and research effort are made possible through the help and support of NIST via cooperative agreements 70NANB18H257 and 70NANB21H167.
}

\bibliographystyle{tlplike}
\bibliography{bibtex/enrico,bibtex/bibfile,bibtex/bib2010,bibtex/tnguyen2018,bibtex/biblio-mb,bibtex/biblio-mh}

\end{document}